\newcommand{\diff}{\mathrm{d}}
\newcommand{\E}{\mathbb{E}}
\newcommand{\Loss}{\mathcal{L}}
\newcommand{\bx}{\mathbf{x}}
\newcommand{\bm}{\boldsymbol{w}_t}
\newcommand{\Real}{\mathbb{R}}
\newcommand{\Normal}{\mathcal{N}}
\newcommand{\Identity}{\boldsymbol{I}}
\newcommand{\rbm}{\bar{\boldsymbol{w}}_t}
\newcommand{\nablaxx}{\nabla_{\hspace{-0.5mm}\bx}}
\newcommand{\noise}{\boldsymbol{\epsilon}}
\newcommand{\by}{\mathbf{y}}
\newcommand{\bU}{\mathbf{U}}
\newcommand{\bV}{\mathbf{V}}
\newcommand{\bX}{\mathbf{X}}
\newcommand{\btheta}{\mathbf{\theta}}
\newcommand{\bphi}{\mathbf{\phi}}
\newtheorem{theorem}{Theorem}
\newtheorem{lemma}{Lemma}
\newtheorem{remark}{Remark}
\DeclareMathOperator*{\argmin}{argmin}
\DeclareMathOperator*{\Tr}{Tr}
\DeclareMathOperator*{\Cov}{Cov}
\definecolor{mark}{RGB}{214, 235, 245}
\newcommand{\mk}{\cellcolor{mark}}
\newcommand{\themodel}{\textbf{\texttt{Graffe}}\xspace}
\begin{document}

\title{Graffe: Graph Representation Learning via Diffusion Probabilistic Models}

\author{Dingshuo Chen$^*$, 
        Shuchen Xue$^*$,
        Liuji Chen, 
        Yingheng Wang, 
        Qiang Liu, \IEEEmembership{Member, IEEE}, \\
        Shu Wu$^\dagger$, \IEEEmembership{Senior Member, IEEE},
        Zhi-Ming Ma,
        and Liang Wang, \IEEEmembership{Fellow, IEEE}
\thanks{$^*$~Equal contribution (alphabetical order)}        
\thanks{$^\dagger$~Corresponding author}

\thanks{Dingshuo Chen, Liuji Chen, Qiang Liu,  Shu Wu, and Liang Wang are with the State Key Laboratory of Multimodal Artificial Intelligence Systems (MAIS), Institute of Automation, Chinese Academy of Sciences (CASIA), Beijing 100190, China}

\thanks{Shuchen Xue and Zhi-Ming Ma are
with the Academy of Mathematics and Systems Science, Chinese Academy of Sciences (CAS), Beijing 100190, China. 
}

\thanks{Yingheng Wang is with the Department of Computer Science, Cornell University, Ithaca, NY 14853, USA. 
} }

\markboth{Journal of \LaTeX\ Class Files,~Vol.~14, No.~8, August~2021}%
{Shell \MakeLowercase{\textit{et al.}}: A Sample Article Using IEEEtran.cls for IEEE Journals}


\maketitle
\begin{abstract}
Diffusion probabilistic models (DPMs), widely recognized for their potential to generate high-quality samples, tend to go unnoticed in representation learning. While recent progress has highlighted their potential for capturing visual semantics, adapting DPMs to graph representation learning remains in its infancy. In this paper, we introduce \themodel, a self-supervised diffusion model proposed for graph representation learning. It features a graph encoder that distills a source graph into a compact representation, which, in turn, serves as the condition to guide the denoising process of the diffusion decoder. To evaluate the effectiveness of our model, we first explore the theoretical foundations of applying diffusion models to representation learning, proving that the denoising objective implicitly maximizes the conditional mutual information between data and its representation. Specifically, we prove that the negative logarithm of the denoising score matching loss is a tractable lower bound for the conditional mutual information. Empirically, we conduct a series of case studies to validate our theoretical insights. In addition, \themodel delivers competitive results under the linear probing setting on node and graph classification tasks, achieving state-of-the-art performance on 9 of the 11 real-world datasets. These findings indicate that powerful generative models, especially diffusion models, serve as an effective tool for graph representation learning.
\end{abstract}

\section{Introduction}
\label{sec:intro}
Self-supervised learning (SSL), which enables effective data understanding without laborious human annotations, is emerging as a key paradigm for addressing both generative and discriminative tasks. When we revisit the evolution of SSL across these two tasks, interestingly, a mutually reinforcing manner becomes evident: 
Progress in one aspect often stimulates progress in the other.
For instance, autoencoder \cite{hinton2006reducing}, which initially made a mark in feature extraction, laid the foundation for the success of VAEs \cite{kingma2013auto} for sample generation. Conversely, breakthroughs in generative tasks like autoregression \cite{radford2018improving} and adversarial training \cite{goodfellow2020generative}, have deepened our understanding of representation learning, driving the development of iGPT \cite{chen2020generative} and BigBiGAN \cite{donahue2019large}. 

Recently, diffusion models\cite{ho2020denoising,song2020score} have demonstrated astonishing generation quality in different domains, particularly in terms of realism, detail depiction, and distribution coverage. A natural question arises: \emph{can we draw on the successful experiences of diffusion models to enhance representation learning?} This issue is particularly pressing in the context of graph learning, since generation—the ability to create—plays a less critical role compared to discrimination on graphs, e.g., social networks, citation networks, and recommendation networks. The question seems not difficult to address, as generation is considered one of the highest manifestations of learning thus having powerful capability to learn high-quality representation \cite{krathwohl2002revision, johnson2018image, wang2023infodiffusion, hudson2024soda}; however, the reality is much more complex. 

To generalize the representation learning power of diffusion models on graph data, two main impediments must be addressed: \ding{172} \textbf{the non-Euclidean nature of graph data}, which complicates the direct application of diffusion models and necessitates consideration of both structural and feature information \cite{li2023survey,li2023gslb}; \ding{173} \textbf{the absence of an encoder component in diffusion model} prevents us from obtaining explicit data representation and finetuning encoder in downstream tasks. Motivated to overcome these challenges, we investigate how to adapt diffusion models to graph representation learning and enhance their discrimination performance.

\begin{figure*}[t]
    \centering    \includegraphics[width=\textwidth,height=!]{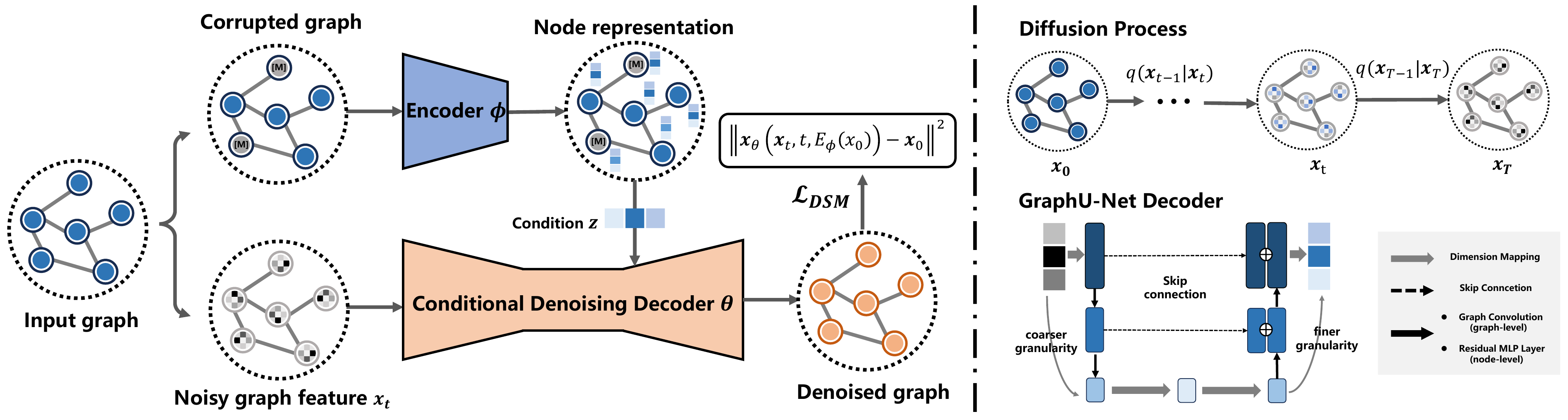}
    \caption{The overall framework of \themodel. \textbf{(Left)} The input graph has certain nodes corrupted and is subsequently fed into a GNN encoder to obtain node representations as the condition. The decoder then receives both the noisy graph features $\bx_t$ and the condition $\mathbf{z}$ as inputs to perform denoising, aiming to restore the original node features $\bx_0$. \textbf{(Right)} The diffusion process of graph features and the architecture of GraphU-Net decoder.}
    \label{fig:framework}
\end{figure*}
This work is particularly relevant to approaches that use diffusion models to capture high-level semantics for classification tasks while enhancing representational capacity. Those approaches can be broadly categorized into two main groups: \emph{(i)} one treats part of the diffusion model itself as a feature extractor (\underline{\emph{implicit-encoder pattern}})\cite{xiang2023denoising, chen2024deconstructing, yang2024directional}. They obtain the latent representation from a certain intermediate layer, which inevitably exposes them to challenge \ding{173}. \emph{(ii)} Another line of work jointly trains the diffusion model and an additional feature extractor  (\underline{\emph{explicit-encoder pattern}})\cite{abstreiter2021diffusion, wang2023infodiffusion, hudson2024soda}. However, the latter pattern have struggled to surpass their contrastive and auto-encoding counterparts.

In this paper, we propose \themodel, which shares a philosophy similar to the explicit-encoder pattern. Starting with the optimization objective for diffusion-based SSL, we analyze diffusion representation learning (DRL) and show that it maximizes the mutual information lower bound between the learned representation and the original input, with more informative representations leading to lower denoising score matching loss, and vice versa. This suggests that DRL implicitly follows a principle akin to the InfoMax principle \cite{linsker1988self, hjelm2018learning}, which we call the Diff-InfoMax principle. Furthermore, we observe from the frequency domain of graph features that DRL excels in capturing high-frequency information. Inspired by our theoretical insights, we instantiate our model with a graph neural network (GNN) encoder for explicit representation extraction and a tailored diffusion decoder, both trained from scratch in tandem. The encoder transforms the graph structure and feature information into a compact representation, which acts as a condition for the decoder together with noisy features to guide the denoising process. The main contributions of this work are three-fold:

\ding{182} We theoretically prove that the negative logarithm of the denoising score matching loss is a tractable lower bound for conditional mutual information. Building on this, we introduce the Diff-InfoMax principle, an extension of the standard InfoMax principle, showing that DRL implicitly follows it.

\ding{183} We propose an effective diffusion-based representation learning method catering to graph tasks, termed as \themodel. Equipped with random node masking and customized diffusion architecture for different task types, it can achieve sufficient graph understanding and obtain representations with rich semantic information.

\ding{184} We conduct extensive experiments on 11 classification tasks under the linear protocol, spanning node- and graph-level tasks of diverse domains. Our method can achieve state-of-the-art or near-optimal performance across all datasets. On \texttt{Computer}, \texttt{Photo}, and \texttt{COLLAB} datasets, our model set a new accuracy record of 91.3\%, 94.2\% and 81.3\%, respectively.

\section{Related Work}

\subsection{Self-supervised Learning on Graphs}
\paragraph{Contrastive methods} Being popular in SSL, contrastive methods aim to learn discriminative representations by contrasting positive and negative samples. 
The key to obtain distinguishable representations lies in the way of constructing contrastive pairs. DGI \cite{velickovic2019deep} and InfoGraph \cite{sun2019infograph}, based on MI maximization, corrupt graph feature and topology to construct negative samples. To avoid the underlying risk of semantic damage, GRACE \cite{zhu2020deep}, GCA \cite{zhu2021graph}, and GraphCL \cite{you2020graph} use other graphs within the same batch as negatives. This approach helps to mitigate issues related to graph-specific distortions while still maintaining the contrastive nature of the objective. Other works, i.e., BGRL \cite{thakoor2021large} and CCA-SSA \cite{zhang2021canonical}, propose to achieve contrastive learning free of negatives yet demanding strong regularization or feature decorrelation. A line of works borrow from data augmentation in the field of computer vision (CV) to construct constrastive pairs, including feature-oriented (\cite{thakoor2021large,you2020graph,zhu2020deep}, shuffling \cite{velickovic2019deep}), perturbation \cite{hu2020gpt,you2020graph}), and graph-theory-based (random walk \cite{hassani2020contrastive,qiu2020gcc}.

\paragraph{Generative methods} Generative self-supervised methods aim to learn informative representations using learning signals from the data itself, usually by maximizing the marginal log-likelihood of the data. GPT-GNN \cite{hu2020gpt}, following the auto-regressive paradigm, iteratively generates graph features and topology, which is unnatural as most graph data has no inherent order. GAE and VGAE \cite{kipf2016variational} learn to reconstruct the adjacency matrix by using the representation learned from GCN, while other graph autoencoders \cite{salehi2019graph,hou2022graphmae, chen2023uncovering, chen2024beyond, zhang2024gder} further combine it with feature reconstruction with tailored strategies. However, these generative methods are usually not principled in terms of probabilistic generative models and often prove to be inferior to the contrastive ones. The reliance on reconstruction-based objectives often limits the ability of these models to capture more complex, higher-level relationships in the graph data.

\subsection{Diffusion Models for Representation Learning}
The very first attempt has combined auto-encoders with diffusion models---e.g., DiffAE \cite{preechakul2022diffusion}, a non-probabilistic auto-encoder model that produces semantically meaningful latent. InfoDiffusion \cite{wang2023infodiffusion}, as the first principled probabilistic generative model for representation learning, augments DiffAE with an auxiliary-variable model family and mutual information maximization. Similarly, \cite{zhang2022unsupervised} uses a pre-trained diffusion decoder and designs a re-weighting scheme to fill in the posterior mean gap. Targeting image classification tasks,  \cite{wei2023diffusion, hudson2024soda} combine latent diffusion with the self-supervised learning objective to get meaningful representations. The decoder-only models \cite{xiang2023denoising, chen2024deconstructing}, directly use the representations from intermediate layers without auxiliary encoders. However, the use of expressive diffusion models for graph representation learning remains under-explored. DDM \cite{yang2024directional} takes an initial step, but the proposed diffusion process is not mathematically rigorous and principled.
\section{Preliminary}

\subsection{Backgound on Diffusion Model}
Diffusion Probabilistic Models (DPMs) construct noisy data through the stochastic differential equation (SDE):
\begin{equation}
\label{eq: forward process}
\diff \bx_t = f(t) \bx_t \diff t + g(t)  \diff \bm,
\end{equation}
where $f(t),g(t):\Real \rightarrow \Real$ is scalar functions such that for each time $t \in [0,T]$, $\bx_t | \bx_0 \sim \Normal(\alpha_t \bx_0, \sigma^2_t\Identity)$, $\alpha_t$, $\sigma_t$ are determined by $f(t)$, $g(t)$, $\bm \in \Real^d$ represents the standard Wiener process. It was demonstrated in\cite{anderson1982reverse} that the forward process~\eqref{eq: forward process} has an equivalent reverse-time diffusion process (from $T$ to $0$), allowing the generation process to be equivalent to numerically solving the reverse SDE\cite{ho2020denoising,song2020score,lu2022dpm,xue2023sa,xue2024accelerating}.   
\begin{equation}
\label{eq: reverse SDE}
\diff \bx_t = \left[f(t) \bx_t - g^2(t) \nablaxx \log p_t(\bx_t)\right] \diff t + g(t)  \diff \rbm,
\end{equation}
where $\rbm$ represents the Wiener process in reverse time, and $\nablaxx\log p_t(\bx)$ is the score function. To get the \textit{score function} $\nablaxx\log p_t(\bx_t)$ in \eqref{eq: reverse SDE}, we usually take neural network $\boldsymbol{s}_{\boldsymbol{\theta}}(\bx, t)$ parameterized by $\boldsymbol{\theta}$ to approximate it by optimizing the Denoising Score Matching loss\cite{song2020score}: 
\begin{equation}
\label{eq: score matching loss}
\E_t \left\{ \Tilde{\lambda}(t) \E_{\bx_0,\bx_t} \left[ \left\| \boldsymbol{s}_{\boldsymbol{\theta}}(\bx, t) - \nabla_{\bx_t}\log p(\bx_t|\bx_0) \right\|_2^2 \right] \right\},
\end{equation}
where $\Tilde{\lambda}(t)$ is a loss weighting function over time. In practice, several methods are used to reparameterize the score-based model. The most popular approach\cite{ho2020denoising} utilizes a \textit{noise prediction model} such that  $\noise_{\boldsymbol{\theta}} (\bx_t, t) = -\sigma_t \boldsymbol{s}_{\boldsymbol{\theta}}(\bx_t, t)$, while others employ a \textit{data prediction model}, represented by $\bx_{\boldsymbol{\theta}} (\bx_t, t) = (\bx_t - \sigma_t \noise_{\boldsymbol{\theta}} (\bx_t, t))/\alpha_t$. The DSM loss is equivalent to the following data prediction loss after changing the weighting function:
\begin{equation}
\Loss_{\bx_0, DSM} = \E_t\left\{\lambda(t) \E_{\bx_0} \E_{\bx_t | \bx_0} \left[ \left\| \bx_{\btheta}(\bx_t, t) - \bx_0 \right\|^2 \right] \right\}.
\end{equation}
\subsection{InfoMax Principle}

Unsupervised representation learning is a key challenge in machine learning, and recently, there has been a resurgence of methods motivated by the InfoMax principle\cite{hjelm2018learning}. Mutual Information (MI) quantifies the "amount of information" obtained about one random variable $X$ by observing the other random variable $Y$. Formally, the MI between $X$ and $Y$ with joint density $p(x,y)$ and marginal densities $p(x)$ and $p(y)$, is defined as the Kullback-Leibler divergence between the joint distribution and the product of the marginal distribution
\begin{equation}
\begin{split}
I(X;Y) &= D_{KL}(P_{(X,Y)} \| P_X \otimes P_Y)\\
&= \E_{p(x,y)} \left[ \log\frac{p(x,y)}{p(x)p(y)} \right].    
\end{split}
\end{equation}
The InfoMax principle chooses a representation $f(x)$ by maximizing the mutual information between the input $x$ and the representation $f(x)$. However, estimating MI, especially in high-dimensional spaces is challenging in nature. And one often optimizes a tractable lower bound of MI in practice\cite{poole2019variational}.



\section{An Information-Theoretic Perspective on Diffusion Representation Learning}

Despite some empirical attempts at Diffusion Representation Learning (DRL), its theoretical foundations remain largely uncharted. In this section, we analyze the DRL through the lens of Information Theory, establishing a connection between the DRL objective and mutual information.

\subsection{The Role of Extra Information in Improving Reconstruction}
\label{sec:extrac_info}

Conditional diffusion models exhibit superior generation quality and lower denoising score matching loss than their unconditional counterparts, as observed by\cite{dhariwal2021diffusion, zhang2022unsupervised}. Figure~\ref{fig:loss_comp} illustrates the denoising score matching loss for the label conditional task (\textbf{Label} curve) is lower than that for the unconditional task (\textbf{Vanilla} curve). This improvement is attributed to the additional information provided by class labels, which aids the diffusion model in effectively denoising noisy data. One might consider class labels $c$ as a special feature extracted from data: $c = E_{\bphi}(\bx)$ where $E_{\bphi}$ is a classifier that outputs class labels. This leads to speculation that more informative representations further enhance the denoising process and lower the denoising score matching loss conditioned on the representations. Thus intuitively one can jointly train the diffusion model conditioning on an additional feature extractor $E_{\bphi}$\cite{abstreiter2021diffusion, hudson2024soda}, as the reconstruction denoising loss will guide the feature extractor toward more informative representations. Formally, the learning objective for DRL is as follows:
\begin{equation}
\begin{split}
&\Loss_{\bx_0, DSM, \bphi}\\
= &\E_t\left\{\lambda(t)\E_{\bx_0} \E_{\bx_t | \bx_0} \left[ \left\| \bx_{\btheta}(\bx_t, t, E_{\bphi}(\bx_0)) - \bx_0 \right\|^2 \right]\right\}.
\end{split}
\end{equation}
In the next part of this section, we elucidate the intuition that more informative representations lead to lower denoising score matching loss from a theoretical standpoint. We eliminate the effects of limited network capacity or optimization errors, allowing us to investigate the influence of additional conditions on the denoising score matching loss under ideal conditions—specifically when the network capacity is adequate and optimization achieves its optimal state. The following theorem demonstrates that the denoising score matching objective has a positive lower bound, even when the network's capacity is sufficiently large.

\begin{theorem}\label{thm:analytic loss min}
The denoising score matching objective $\Loss_{\bx_0, DSM}$ has a \textbf{strictly positive} lower bound, regardless of the network capacity and expressive power
\begin{equation}
\begin{aligned}
&\min_{\bx_{\theta}} \Loss_{\bx_0, DSM} \\
= &\min_{\bx_{\theta}}\E_t\left\{\lambda(t) \E_{\bx_0} \E_{\bx_t | \bx_0} \left[ \left\| \bx_{\btheta}(\bx_t, t) - \bx_0 \right\|^2 \right] \right\}\\
= &\E_t\left\{\lambda(t)\E_{\bx_t}\left[\Tr(\Cov[\bx_0 | \bx_t])\right]\right\} > 0,
\end{aligned}
\end{equation}
where $\Tr$ is the Trace of matrix and $\Cov$ is the covariance matrix. The conditioned denoising score matching objective objective $\Loss_{\bx_0, DSM, \bphi}$ has a \textbf{non-negative} lower bound, i.e.
\begin{equation}
\begin{aligned}
&\min_{\bx_{\theta}} \Loss_{\bx_0, DSM, \bphi}\\
=& \E_t\left\{\lambda(t)\E_{\bx_0,\bx_t}\left[\Tr\left(\Cov\left[\bx_0 | \bx_t, E_{\bphi}(\bx_0)\right]\right)\right]\right\} \geq 0.
\end{aligned}
\end{equation}
\end{theorem}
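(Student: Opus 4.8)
The plan is to recognize each inner minimization as a least-squares (minimum mean-squared-error) regression problem and dispatch it with the bias–variance decomposition. First I would use the tower property to swap the order of expectation, rewriting the joint expectation over $(\bx_0,\bx_t)$ as an outer expectation over $\bx_t$ and an inner one over the posterior $\bx_0\mid\bx_t$, so that
\begin{equation*}
\Loss_{\bx_0,DSM}=\E_t\!\left\{\lambda(t)\,\E_{\bx_t}\,\E_{\bx_0\mid\bx_t}\!\left[\left\|\bx_{\btheta}(\bx_t,t)-\bx_0\right\|^2\right]\right\}.
\end{equation*}
Because no capacity restriction is imposed, $\bx_{\btheta}(\cdot,t)$ ranges over \emph{all} measurable functions of $(\bx_t,t)$, so the minimization can be performed pointwise in $(\bx_t,t)$: for a fixed vector $a\in\Real^d$ the identity $\E_{\bx_0\mid\bx_t}\!\left[\|a-\bx_0\|^2\right]=\|a-\E[\bx_0\mid\bx_t]\|^2+\Tr(\Cov[\bx_0\mid\bx_t])$ shows that the unique minimizer is $a^\star=\E[\bx_0\mid\bx_t]$, which is itself a measurable function of $\bx_t$ and hence an admissible choice for $\bx_{\btheta}$. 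Substituting back and integrating gives $\min_{\bx_{\btheta}}\Loss_{\bx_0,DSM}=\E_t\{\lambda(t)\,\E_{\bx_t}[\Tr(\Cov[\bx_0\mid\bx_t])]\}$.

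For the strict positivity I would argue that the posterior $\bx_0\mid\bx_t$ cannot degenerate to a point mass for almost every $\bx_t$ whenever the data law is non-degenerate. Indeed, if $\Tr(\Cov[\bx_0\mid\bx_t])=0$ a.s., then $\bx_0=g(\bx_t)$ a.s.\ for some measurable $g$; writing $\bx_t=\alpha_t\bx_0+\sigma_t\noise$ with $\sigma_t>0$ and $\noise\sim\Normal(\boldzero,\Identity)$ independent of $\bx_0$, and using that $\sigma_t\noise$ has a density equivalent to Lebesgue measure, one forces $g$ to be Lebesgue-a.e.\ equal to each point in the support of $\bx_0$ simultaneously, which is impossible as soon as that support has at least two points. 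Hence $\Tr(\Cov[\bx_0\mid\bx_t])>0$ on a set of positive measure, and since $\lambda(t)>0$ the bound is strictly positive.

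The conditional case is structurally identical: the regressor now additionally sees $E_{\bphi}(\bx_0)$, so conditioning on the triple $(\bx_t,t,z)$ with $z=E_{\bphi}(\bx_0)$ and repeating the pointwise argument yields minimizer $\E[\bx_0\mid\bx_t,E_{\bphi}(\bx_0)]$ and minimum value $\E_t\{\lambda(t)\,\E_{\bx_0,\bx_t}[\Tr(\Cov[\bx_0\mid\bx_t,E_{\bphi}(\bx_0)])]\}$, which is $\ge 0$ because it averages traces of covariance matrices (and collapses to $0$ when, e.g., $E_{\bphi}$ is injective, so no strict inequality can be claimed here). I expect the main point requiring care to be the ``minimize under the integral sign'' step — that infimizing over the function class equals integrating the pointwise infimum — which rests on the measurability of $\bx_t\mapsto\E[\bx_0\mid\bx_t]$ (immediate from the definition of conditional expectation) together with the standing integrability assumption $\E\|\bx_0\|^2<\infty$ that makes the covariances finite; the non-degeneracy observation for strict positivity is the only other place where a hypothesis on the data distribution is invoked.
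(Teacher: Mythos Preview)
Your proposal is correct and follows essentially the same approach as the paper: both identify the minimizer via the orthogonality/bias--variance decomposition around the conditional mean $\E[\bx_0\mid\bx_t]$ (the paper expands the square and kills the cross term with the tower property, which is exactly your identity $\E_{\bx_0\mid\bx_t}\|a-\bx_0\|^2=\|a-\E[\bx_0\mid\bx_t]\|^2+\Tr(\Cov[\bx_0\mid\bx_t])$), then substitute back to obtain the trace-of-covariance expression, and handle the conditional case identically with $(\bx_t,E_{\bphi}(\bx_0))$ in place of $\bx_t$. Your treatment of strict positivity is in fact more detailed than the paper's, which simply asserts it ``for non-degenerated distributions $\bx_0\mid\bx_t$''.
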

\begin{figure*}[t]
    \centering    \includegraphics[width=0.75\textwidth,height=!]{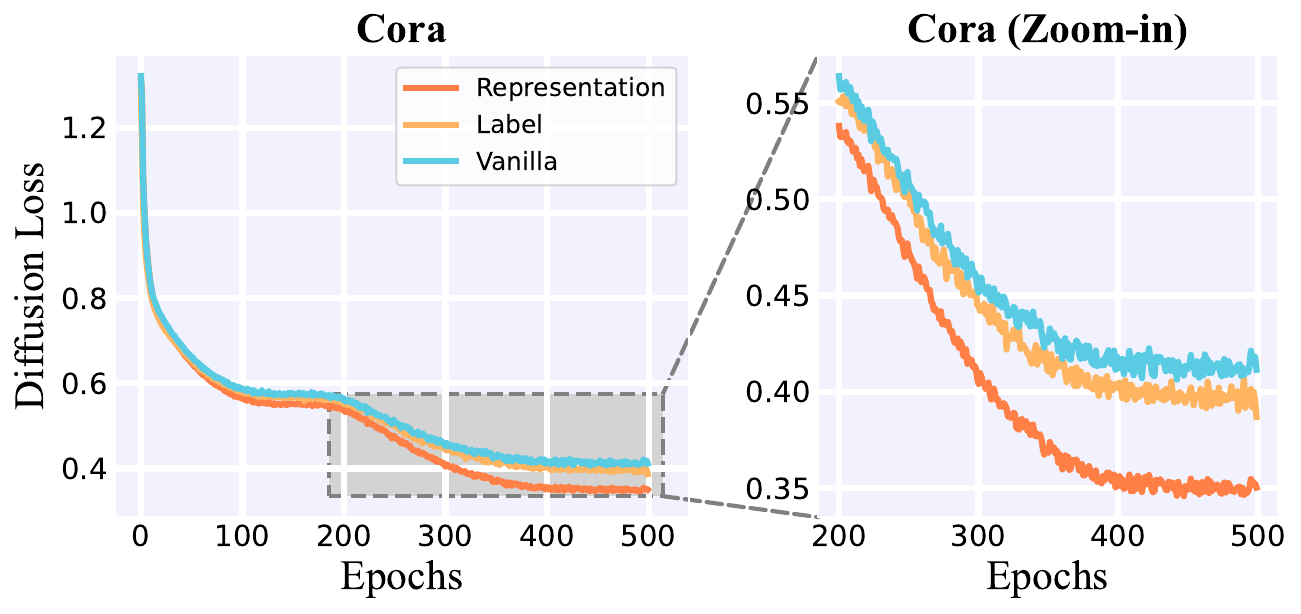}
    \caption{The comparison of denoising losses using different conditions on Cora datasets. \textbf{(Vanilla)} The denoising loss without condition information. \textbf{(Label)} Class label information obtained via linear embedding. \textbf{(Representation)} Learned representations obtained from \themodel.}
    \label{fig:loss_comp}
\end{figure*}
The proof is provided in Appendix A.
Theorem~\ref{thm:analytic loss min} reveals an attractive property of the denoising score matching loss: its minimum value is determined by the uncertainty of the conditional distribution (the trace of the covariance matrix serves as a multidimensional generalization of variance). Additionally, Theorem~\ref{thm:lower loss} demonstrates that the supplementary information provided by the feature extractor $E_{\bphi}$ reduces the lower bound of DSM by decreasing the uncertainty of the conditional distribution through more informative representations.

To formally demonstrate the claim in Theorem~\ref{thm:lower loss} regarding the reduction of the loss lower bound, we rely on two fundamental results concerning conditional expectations, presented below as lemmas. 
The proofs of lemmas are in Appendix A.
\begin{lemma}\label{lemma:prob1}
$\bU$ and $\bV$ are two square-integrable random variables. $\bU$ is $\mathcal{G}$-measurable and $\E\left[\bV|\mathcal{G}\right] = \mathbf{0}$, then
\begin{equation}
\E \left[ \left\| \bU+\bV \right\|^2 \right] = \E \left[ \left\| \bU \right\|^2 \right] + \E \left[ \left\| \bV \right\|^2 \right].
\end{equation}
\end{lemma}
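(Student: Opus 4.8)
The plan is to reduce the identity to the vanishing of a single cross term via the elementary expansion of the squared norm, and then to kill that cross term using the tower property together with the hypothesis $\E[\bV \mid \mathcal{G}] = \boldzero$; this is just the Hilbert-space Pythagorean theorem phrased at the level of $L^2$ with $\mathcal{G}$-measurable and $\mathcal{G}$-orthogonal components.

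First I would expand, pointwise on the sample space,
\begin{equation}
\|\bU + \bV\|^2 = \|\bU\|^2 + 2\langle \bU, \bV\rangle + \|\bV\|^2,
\end{equation}
where $\langle\cdot,\cdot\rangle$ denotes the Euclidean inner product. Taking expectations is legitimate because every term is integrable: $\|\bU\|^2$ and $\|\bV\|^2$ by the square-integrability hypothesis, and $\langle \bU,\bV\rangle$ by Cauchy--Schwarz, since $|\langle \bU,\bV\rangle| \le \|\bU\|\,\|\bV\| \le \tfrac12(\|\bU\|^2 + \|\bV\|^2)$. Hence it suffices to show $\E[\langle \bU,\bV\rangle] = 0$.

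For this I would invoke the tower property, $\E[\langle \bU,\bV\rangle] = \E\big[\E[\langle \bU,\bV\rangle \mid \mathcal{G}]\big]$, and then pull $\bU$ out of the inner conditional expectation: since $\bU$ is $\mathcal{G}$-measurable and the product $\langle \bU,\bV\rangle$ is integrable (by the bound above), the ``taking out what is known'' property applies coordinatewise, giving $\E[\langle \bU,\bV\rangle \mid \mathcal{G}] = \langle \bU, \E[\bV \mid \mathcal{G}]\rangle = \langle \bU, \boldzero\rangle = 0$ almost surely. Therefore $\E[\langle \bU,\bV\rangle] = 0$, and substituting back into the expansion yields $\E[\|\bU+\bV\|^2] = \E[\|\bU\|^2] + \E[\|\bV\|^2]$.

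The only point requiring care — and the one I would be explicit about — is the integrability of the cross term $\langle \bU,\bV\rangle$, since that is exactly what licenses both taking its expectation and applying the pull-out rule for conditional expectation; this is immediate from Cauchy--Schwarz and the square-integrability of $\bU$ and $\bV$, so there is no real obstacle here, only bookkeeping.
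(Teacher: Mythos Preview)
Your proposal is correct and follows exactly the same route as the paper: expand $\|\bU+\bV\|^2$, reduce to $\E[\langle \bU,\bV\rangle]=0$, and kill the cross term via the tower property and pulling out the $\mathcal{G}$-measurable factor $\bU$. If anything, your version is slightly more careful than the paper's, since you explicitly justify the integrability of $\langle \bU,\bV\rangle$ via Cauchy--Schwarz before invoking the pull-out rule, whereas the paper's proof simply writes down the same three-line computation without that remark.
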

Lemma~\ref{lemma:prob1} establishes an orthogonality condition. This condition allows us to prove the following lemma concerning the effect of increasing information (represented by larger sigma-algebras) on conditional expectations.
\begin{lemma}\label{lemma:prob2}
$\bX$ is a random variable, $\mathcal{F}$ and $\mathcal{G}$ are two $\sigma$-algebras such that $\mathcal{G} \subset \mathcal{F}$, then we have 
\begin{equation}
\E \left[ \left\| \E \left[ \bX \middle| \mathcal{F} \right] \right\|^2 \right] \geq \E \left[ \left\| \E \left[ \bX \middle| \mathcal{G} \right] \right\|^2 \right].
\end{equation}

\end{lemma}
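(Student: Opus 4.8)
The plan is to prove Lemma~\ref{lemma:prob2} using the tower property together with the orthogonality decomposition in Lemma~\ref{lemma:prob1}. The key observation is that $\E[\bX\mid\mathcal{F}]$ can be split into its projection onto the smaller $\sigma$-algebra $\mathcal{G}$ plus a residual. Concretely, set $\bU := \E[\bX\mid\mathcal{G}]$ and $\bV := \E[\bX\mid\mathcal{F}] - \E[\bX\mid\mathcal{G}]$, so that $\bU + \bV = \E[\bX\mid\mathcal{F}]$. First I would verify the hypotheses of Lemma~\ref{lemma:prob1}: the variable $\bU = \E[\bX\mid\mathcal{G}]$ is by construction $\mathcal{G}$-measurable, and since $\mathcal{G}\subset\mathcal{F}$, the tower property gives $\E[\bV\mid\mathcal{G}] = \E[\E[\bX\mid\mathcal{F}]\mid\mathcal{G}] - \E[\bX\mid\mathcal{G}] = \E[\bX\mid\mathcal{G}] - \E[\bX\mid\mathcal{G}] = \mathbf{0}$. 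Square-integrability of $\bU$ and $\bV$ follows from that of $\bX$ via the conditional Jensen inequality (conditional expectation is an $L^2$-contraction).

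Having checked the hypotheses, I would apply Lemma~\ref{lemma:prob1} to obtain
\begin{equation}
\E\left[\left\|\E\left[\bX\middle|\mathcal{F}\right]\right\|^2\right] = \E\left[\left\|\bU+\bV\right\|^2\right] = \E\left[\left\|\bU\right\|^2\right] + \E\left[\left\|\bV\right\|^2\right] = \E\left[\left\|\E\left[\bX\middle|\mathcal{G}\right]\right\|^2\right] + \E\left[\left\|\bV\right\|^2\right].
\end{equation}
Since $\E\left[\left\|\bV\right\|^2\right]\geq 0$, the claimed inequality follows immediately, with equality precisely when $\E[\bX\mid\mathcal{F}] = \E[\bX\mid\mathcal{G}]$ almost surely.

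I do not anticipate a serious obstacle here; the only mild subtlety is making sure the setup matches the exact hypotheses of Lemma~\ref{lemma:prob1}, namely confirming the $\mathcal{G}$-measurability of $\bU$ and the vanishing conditional mean of $\bV$ — both of which are routine consequences of the defining properties and the tower rule of conditional expectation. It is also worth noting for the downstream application (reducing the lower bound in Theorem~\ref{thm:lower loss}) that this lemma is exactly the statement that the conditional variance decreases when one conditions on a richer $\sigma$-algebra, since $\Tr(\Cov[\bX\mid\mathcal{F}]) = \E[\|\bX\|^2\mid\mathcal{F}] - \|\E[\bX\mid\mathcal{F}]\|^2$ and taking expectations turns the monotonicity of the second term into monotonicity of the (negative) residual variance; I would flag this connection in a short remark so the role of the lemma in the main argument is transparent.
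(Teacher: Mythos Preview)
Your proposal is correct and follows essentially the same approach as the paper: define $\bU=\E[\bX\mid\mathcal{G}]$ and $\bV=\E[\bX\mid\mathcal{F}]-\E[\bX\mid\mathcal{G}]$, verify the hypotheses of Lemma~\ref{lemma:prob1} via the tower property, and conclude from the nonnegativity of $\E[\|\bV\|^2]$. Your extra remarks on square-integrability and the link to Theorem~\ref{thm:lower loss} are helpful but not part of the paper's proof.
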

Equipped with Lemma~\ref{lemma:prob1} and Lemma~\ref{lemma:prob2}, we are now prepared to formally state Theorem~\ref{thm:lower loss}, which compares the minimum achievable loss values.
\begin{theorem}\label{thm:lower loss}
The conditioned denoising score matching objective $\Loss_{\bx_0, DSM, \bphi}$ has a smaller minimum compared with the vanilla objective:
\begin{equation}
\min_{\bx_{\theta}} \Loss_{\bx_0, DSM, \bphi} \leq \min_{\bx_{\theta}}\Loss_{\bx_0, DSM}.
\end{equation}    
\end{theorem}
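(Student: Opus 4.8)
The plan is to reduce the comparison of minima to a pointwise-in-$(t,\bx_0,\bx_t)$ comparison of conditional covariance traces, and then to recognize each trace as a conditional variance that can be bounded using the tower property together with Lemma~\ref{lemma:prob2}. By Theorem~\ref{thm:analytic loss min} the two minima are
\[
\min_{\bx_{\theta}}\Loss_{\bx_0, DSM} = \E_t\!\left\{\lambda(t)\,\E_{\bx_t}\!\left[\Tr(\Cov[\bx_0 \mid \bx_t])\right]\right\},\qquad
\min_{\bx_{\theta}}\Loss_{\bx_0, DSM, \bphi} = \E_t\!\left\{\lambda(t)\,\E_{\bx_0,\bx_t}\!\left[\Tr\!\left(\Cov\!\left[\bx_0 \mid \bx_t, E_{\bphi}(\bx_0)\right]\right)\right]\right\}.
\]
Since $\lambda(t)\ge 0$, it suffices to show that for (almost) every $t$,
\[
\E_{\bx_0,\bx_t}\!\left[\Tr\!\left(\Cov\!\left[\bx_0 \mid \bx_t, E_{\bphi}(\bx_0)\right]\right)\right] \;\le\; \E_{\bx_t}\!\left[\Tr(\Cov[\bx_0 \mid \bx_t])\right],
\]
i.e., conditioning on the strictly larger sigma-algebra $\mathcal{F} := \sigma(\bx_t, E_{\bphi}(\bx_0)) \supseteq \mathcal{G} := \sigma(\bx_t)$ does not increase the (expected) conditional variance.

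The key step is to rewrite each side using the identity $\Tr(\Cov[\bx_0 \mid \mathcal{H}]) = \E[\|\bx_0\|^2 \mid \mathcal{H}] - \|\E[\bx_0 \mid \mathcal{H}]\|^2$, valid for any sub-$\sigma$-algebra $\mathcal{H}$. Taking expectations and applying the tower property to the first term (which is the same, $\E[\|\bx_0\|^2]$, for both $\mathcal{F}$ and $\mathcal{G}$), the desired inequality collapses to
\[
\E\!\left[\left\|\E[\bx_0 \mid \mathcal{F}]\right\|^2\right] \;\ge\; \E\!\left[\left\|\E[\bx_0 \mid \mathcal{G}]\right\|^2\right],
\]
which is exactly the statement of Lemma~\ref{lemma:prob2} with $\bX = \bx_0$, applied coordinate-wise (or directly with the squared Euclidean norm). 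Putting this pointwise bound back under $\E_t\{\lambda(t)\,\cdot\,\}$ and using $\lambda(t)\ge 0$ yields the claimed inequality $\min_{\bx_{\theta}} \Loss_{\bx_0, DSM, \bphi} \le \min_{\bx_{\theta}}\Loss_{\bx_0, DSM}$.

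I expect the only genuinely delicate point to be bookkeeping rather than mathematics: making precise that the $\E_{\bx_0,\bx_t}$ appearing in the conditional objective is the same joint expectation as integrating $\E_{\bx_t}$ of a $\sigma(\bx_t)$-measurable quantity, so that the tower property legitimately kills the $\E[\|\bx_0\|^2]$ terms on both sides and the two expressions are being compared under a common measure. One should also note that $E_{\bphi}(\bx_0)$ is a deterministic function of $\bx_0$, so $\sigma(\bx_t, E_{\bphi}(\bx_0))$ is indeed a sub-$\sigma$-algebra of $\sigma(\bx_t,\bx_0)$ and genuinely contains $\sigma(\bx_t)$; this is what licenses the use of Lemma~\ref{lemma:prob2}. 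No optimization over $\bx_\theta$ re-enters the argument — Theorem~\ref{thm:analytic loss min} has already done that work — so the proof is essentially a two-line corollary of Lemma~\ref{lemma:prob2} once the covariance-trace identity is in place.
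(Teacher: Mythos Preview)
Your proposal is correct and follows essentially the same route as the paper: both arguments invoke Theorem~\ref{thm:analytic loss min} to replace the minima by expected conditional-covariance traces, reduce to a pointwise-in-$t$ inequality, use the identity $\E[\Tr(\Cov[\bx_0\mid\mathcal{H}])]=\E[\|\bx_0\|^2]-\E[\|\E[\bx_0\mid\mathcal{H}]\|^2]$ (the paper derives this by expanding the MSE, you state it directly), and then conclude via Lemma~\ref{lemma:prob2} with $\mathcal{G}=\sigma(\bx_t)\subset\mathcal{F}=\sigma(\bx_t,E_{\bphi}(\bx_0))$. Your explicit remark that $\lambda(t)\ge 0$ is needed for the pointwise reduction is a small clarification the paper leaves implicit.
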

\begin{proof}
According to~\Cref{thm:analytic loss min}, the minimum values for the vanilla and conditioned objectives are known to be:
\begin{equation}
\min_{\bx_{\theta}} \Loss_{\bx_0, DSM} = \E_t\left\{\lambda(t)\E_{\bx_t}\left[\Tr(\Cov[\bx_0 | \bx_t])\right]\right\}.
\end{equation}
\begin{equation}
\min_{\bx_{\theta}} \Loss_{\bx_0, DSM, \bphi} = \E_t\left\{\lambda(t)\E_{\bx_0, \bx_t}\left[\Tr(\Cov[\bx_0 | \bx_t, E_{\bphi}(\bx_0)])\right]\right\}.
\end{equation}
To establish the theorem, it is sufficient to prove the following inequality holds for the terms inside the expectation over $t$:
\begin{equation}
\E_{\bx_0, \bx_t}\left[\Tr(\Cov[\bx_0 | \bx_t, E_{\bphi}(\bx_0)])\right] \leq \E_{\bx_t}\left[\Tr(\Cov[\bx_0 | \bx_t])\right].
\end{equation}
Recall that the trace of the conditional covariance matrix is related to the expected squared error of the conditional mean estimator: $\E_{Y}[\Tr(\Cov[X | Y])] = \E_{X,Y}[\|X - \E[X | Y]\|^2]$. The inequality above is equivalent to showing:
\begin{equation}
\begin{aligned}
&\E_{\bx_0, \bx_t} \left[ \left\|
\E \left[ \bx_0 \middle| \bx_t, E_{\bphi}(\bx_0) \right] - \bx_0 \right\|^2 \right] \\
\leq &\E_{\bx_0, \bx_t} \left[ \left\|
\E \left[ \bx_0 \middle| \bx_t \right] - \bx_0 \right\|^2 \right].
\end{aligned}
\end{equation}
Let us expand the left-hand side term. Using the linearity of expectation and the tower property, we derive:
\begin{equation}
\begin{aligned}
&\E_{\bx_0, \bx_t} \left[ \left\| \E [\bx_0 | \bx_t, E_{\bphi}(\bx_0)] - \bx_0 \right\|^2 \right]\\
= & \E_{\bx_0, \bx_t} \left[ \left\| \E [\bx_0 | \bx_t, E_{\bphi}(\bx_0)] \right\|^2\right] + \E_{\bx_0, \bx_t} \left[ \| \bx_0 \|^2\right] \\
& - \E_{\bx_0, \bx_t} \left[ 2\left\langle \E [\bx_0 | \bx_t, E_{\bphi}(\bx_0)] , \bx_0 \right\rangle \right] \\
= & \E_{\bx_0, \bx_t} \left[ \left\| \E [\bx_0 | \bx_t, E_{\bphi}(\bx_0)] \right\|^2\right] + \E_{\bx_0, \bx_t} \left[ \| \bx_0 \|^2\right] \\
& - \E_{\bx_t, E_{\bphi}(\bx_0)}\E_{\bx_0|\bx_t, E_{\bphi}(\bx_0)} \left[ 2\left\langle \E [\bx_0 | \bx_t, E_{\bphi}(\bx_0)] , \bx_0 \right\rangle \right] \\
= & \E_{\bx_0, \bx_t} \left[ \left\| \E [\bx_0 | \bx_t, E_{\bphi}(\bx_0)] \right\|^2\right] + \E_{\bx_0, \bx_t} \left[ \| \bx_0 \|^2\right] \\
& - 2\E_{\bx_t, E_{\bphi}(\bx_0)}\left[ \left\langle \E [\bx_0 | \bx_t, E_{\bphi}(\bx_0)], \E [\bx_0 | \bx_t, E_{\bphi}(\bx_0)] \right\rangle\right] \\
= & \E_{\bx_0, \bx_t} \left[ \| \bx_0 \|^2\right] - \E_{\bx_0, \bx_t} \left[ \left\| \E [\bx_0 | \bx_t, E_{\bphi}(\bx_0)] \right\|^2\right].
\end{aligned}
\end{equation}
Similarly, for the right-hand side term, we have:
\begin{equation}
\begin{aligned}
&\E_{\bx_0, \bx_t} \left[ \left\| \E \left[ \bx_0 \middle| \bx_t \right] - \bx_0 \right\|^2 \right] \\
= & \E_{\bx_0, \bx_t} \left[ \left\| \bx_0 \right\|^2 \right] - \E_{\bx_0, \bx_t} \left[ \left\| \E \left[ \bx_0 \middle| \bx_t \right] \right\|^2 \right].
\end{aligned}
\end{equation}
Thus it's equivalent to proving the following inequality
\begin{equation}
\E_{\bx_0, \bx_t} \left[ \left\| \E \left[ \bx_0 \middle| \bx_t \right] \right\|^2 \right] 
\leq 
\E_{\bx_0, \bx_t} \left[ \left\| \E \left[ \bx_0 \middle| \bx_t, E_{\bphi}(\bx_0) \right] \right\|^2 \right].
\end{equation}
Note that the $\sigma$-algebra $\sigma(\bx_t) \subset \sigma(\bx_t, E_{\bphi}(\bx_0))$, according to lemma \ref{lemma:prob2}, the result holds.
\end{proof}
Theorem~\ref{thm:lower loss} offers a qualitative insight, indicating that informative representations diminish the uncertainty in the conditional distribution. Figure~\ref{fig:loss_comp} shows the denoising score matching loss for the representation conditional task (\textbf{Representation} curve) is lower than both the unconditional task (\textbf{Vanilla} curve) and the label conditional task (\textbf{Label} curve). This suggests that the learned representation contains richer information than class labels alone.

\subsection{Diff-InfoMax Principle}

Intuitively a poor representation dominated by noise provides little useful information, failing to assist the diffusion model in denoising. In contrast, a rich and informative representation enhances the model's denoising capabilities. In this section, we will quantitatively analyze this from an information-theoretic perspective. Notably, the DRL objective is closely related to the conditional mutual information between $E_{\bphi}(\bx_0)$ and $\bx_0$ given $\bx_t$. Our information-theoretic analysis relies on relating the uncertainty measured by the DSM loss to entropy. The following lemma identifies the distribution that maximizes entropy under constraints relevant to our analysis, namely a fixed trace of the covariance matrix.
\begin{lemma}\label{lemma:max entropy}
Let $\Pi_t$ be the set of distribution $p(x)$ on $\mathbb{R}^n$ satisfying the following condition:
\begin{equation}
    \E_p\left[\mathbf{X}\right] = \mathbf{0}, \quad\Tr\left(\Cov_p\left[\mathbf{X}\right]\right) = t.
\end{equation}
Then the n-dimensional Gaussian distribution with mean $\mathbf{0}$ and covariance matrix $\Sigma = \frac{t}{n}I_n$ is the maximum entropy distribution in $\Pi_t$
\end{lemma}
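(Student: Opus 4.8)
The plan is to prove this via the classical cross-entropy (Gibbs-inequality) argument, exploiting the fact that the log-density of the candidate Gaussian is an affine function of $\|x\|^2$, so that the trace-of-covariance constraint is precisely what pins the cross-entropy down to a constant.

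First I would fix $q$ to be the density of $\Normal\!\left(\boldzero, \tfrac{t}{n}I_n\right)$ and record that $q \in \Pi_t$: its mean is $\boldzero$ and $\Tr(\Cov_q[\mathbf{X}]) = n\cdot\tfrac{t}{n} = t$, so the supremum of the differential entropy over $\Pi_t$ will be genuinely attained (not merely approached) once $q$ is shown to be optimal. Then, for an arbitrary $p \in \Pi_t$ with differential entropy $h(p) = -\E_p[\log p(\mathbf{X})]$, I would invoke the non-negativity of the Kullback--Leibler divergence,
\[
0 \le D_{KL}(p\,\|\,q) = -h(p) - \E_p[\log q(\mathbf{X})],
\]
which gives $h(p) \le -\E_p[\log q(\mathbf{X})]$, i.e.\ the entropy is bounded by the cross-entropy of $p$ relative to $q$.

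The remaining step is to evaluate this bound. Since $\log q(x) = -\tfrac{n}{2}\log\!\left(\tfrac{2\pi t}{n}\right) - \tfrac{n}{2t}\|x\|^2$, we get
\[
-\E_p[\log q(\mathbf{X})] = \tfrac{n}{2}\log\!\left(\tfrac{2\pi t}{n}\right) + \tfrac{n}{2t}\,\E_p\!\left[\|\mathbf{X}\|^2\right],
\]
and because $\E_p[\mathbf{X}]=\boldzero$ we have $\E_p[\|\mathbf{X}\|^2] = \sum_{i=1}^n \Var_p[X_i] = \Tr(\Cov_p[\mathbf{X}]) = t$. Hence the cross-entropy equals $\tfrac{n}{2}\log\!\left(\tfrac{2\pi t}{n}\right) + \tfrac{n}{2} = \tfrac{n}{2}\log\!\left(\tfrac{2\pi e t}{n}\right)$, which is exactly $h(q) = \tfrac12\log\!\left((2\pi e)^n \det(\tfrac{t}{n}I_n)\right)$. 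Combining, $h(p)\le h(q)$ for every $p\in\Pi_t$, with equality iff $D_{KL}(p\|q)=0$, i.e.\ $p=q$ a.e.; this identifies $q$ as the (unique) maximum-entropy element of $\Pi_t$.

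I do not anticipate a substantive obstacle: the only points requiring care are (i) noting explicitly that $q$ itself lies in $\Pi_t$, so the extremum is attained, and (ii) the mild caveat that $h(p)$ may be $-\infty$ for degenerate $p$ (e.g.\ distributions not absolutely continuous), in which case the inequality is vacuous — so the argument should be phrased as an upper bound on $h(p)$, needing nothing beyond the square-integrability already built into the definition of $\Pi_t$.
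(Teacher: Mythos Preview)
Your proof is correct but follows a genuinely different route from the paper. The paper argues in two stages: it first invokes the standard fact that the Gaussian maximizes differential entropy among all distributions with the same mean and covariance, thereby reducing the search to Gaussians $p_\Sigma$ with $\Tr(\Sigma)=t$; it then maximizes $h(p_\Sigma)=\tfrac12\bigl(n+\log((2\pi)^n\det\Sigma)\bigr)$ over such $\Sigma$ by applying the AM--GM inequality to the eigenvalues, obtaining $\det\Sigma\le(t/n)^n$ with equality iff $\Sigma=\tfrac{t}{n}I_n$. Your approach bypasses both steps by going straight to the Gibbs inequality with the target isotropic Gaussian $q$: because $\log q$ is affine in $\|x\|^2$, the cross-entropy $-\E_p[\log q(\mathbf X)]$ depends on $p$ only through $\E_p[\|\mathbf X\|^2]=\Tr(\Cov_p[\mathbf X])=t$, which collapses the bound to $h(q)$ for every $p\in\Pi_t$ in one stroke. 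Your argument is more self-contained and yields uniqueness immediately from the equality case of $D_{KL}$; the paper's version is more modular and makes the separate roles of ``Gaussian beats non-Gaussian'' and ``isotropic beats anisotropic under a trace constraint'' explicit, at the cost of citing the first fact as a black box.
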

The proof is provided in Appendix A. 
Leveraging Lemma~\ref{lemma:max entropy}, which bounds the entropy for a given variance (trace), we can now state and prove the theorem linking the DSM loss to conditional mutual information.
\begin{theorem}\label{thm:diff infomax}
Suppose $\bx_0 \in \Real^d$, let  $\Loss_{\bx_0, DSM, \bphi, t} = \E_{\bx_0,\bx_t} \left[ \left\| \bx_{\btheta}(\bx_t, t, E_{\bphi}(\bx_0)) - \bx_0 \right\|^2 \right]$be the conditional denoising score matching loss at time $t$, and let $h(\bx|\by)$ be the conditional entropy of $\bx$ given $\by$, then the negative logarithm of denoising score matching loss is a lower bound for the conditional mutual information between data and feature, which quantifies the shared information between $\bx_0$ and $E_{\bphi}(\bx_0)$, given the knowledge of $\bx_t$
\begin{equation}
\begin{aligned}
&\quad I\left(\bx_0 \,;\, E_{\bphi}(\bx_0) \mid \bx_t\right) \geq - \log \Loss_{\bx_0, DSM, \bphi, t} + C, \\
&\text{where } C = \log \left( \frac{d}{2\pi e} \right) + \frac{2}{d} \, h\left(\bx_0 \mid \bx_t\right) \text{ is a constant.}
\end{aligned}
\end{equation}
\end{theorem}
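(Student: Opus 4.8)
The plan is to start from the definition of conditional mutual information in terms of differential entropies, namely $I(\bx_0 ; E_{\bphi}(\bx_0) \mid \bx_t) = h(\bx_0 \mid \bx_t) - h(\bx_0 \mid \bx_t, E_{\bphi}(\bx_0))$, and then bound the second (subtracted) term from above using a maximum-entropy argument. First I would recall from Theorem~\ref{thm:analytic loss min} that the optimal conditional DSM loss at time $t$ satisfies $\Loss_{\bx_0, DSM, \bphi, t} = \E_{\bx_0,\bx_t}\left[\Tr\left(\Cov\left[\bx_0 \mid \bx_t, E_{\bphi}(\bx_0)\right]\right)\right]$, so that the loss is exactly the expected trace of the posterior covariance. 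The key step is then to apply Lemma~\ref{lemma:max entropy}: for the conditional law of $\bx_0$ given a fixed realization of $(\bx_t, E_{\bphi}(\bx_0))$, which has some trace of covariance $\tau$, the differential entropy is at most that of a Gaussian with covariance $\frac{\tau}{d} I_d$, i.e. at most $\frac{d}{2}\log\left(\frac{2\pi e \, \tau}{d}\right)$. Integrating this pointwise bound over the distribution of $(\bx_t, E_{\bphi}(\bx_0))$ gives $h(\bx_0 \mid \bx_t, E_{\bphi}(\bx_0)) \leq \frac{d}{2}\, \E\left[\log\left(\frac{2\pi e}{d}\Tr\Cov[\bx_0\mid\bx_t,E_{\bphi}(\bx_0)]\right)\right]$.

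Next I would push the expectation inside the logarithm using Jensen's inequality (concavity of $\log$): the right-hand side is bounded by $\frac{d}{2}\log\left(\frac{2\pi e}{d}\, \E\left[\Tr\Cov[\bx_0\mid\bx_t,E_{\bphi}(\bx_0)]\right]\right) = \frac{d}{2}\log\left(\frac{2\pi e}{d}\,\Loss_{\bx_0, DSM, \bphi, t}\right)$, where the last equality is precisely the closed-form minimum from Theorem~\ref{thm:analytic loss min}. Substituting this into the mutual information identity yields
\begin{equation}
\begin{aligned}
I\left(\bx_0 ; E_{\bphi}(\bx_0) \mid \bx_t\right) &= h(\bx_0\mid\bx_t) - h(\bx_0\mid\bx_t, E_{\bphi}(\bx_0))\\
&\geq h(\bx_0\mid\bx_t) - \frac{d}{2}\log\left(\frac{2\pi e}{d}\,\Loss_{\bx_0, DSM, \bphi, t}\right).
\end{aligned}
\end{equation}
Dividing and regrouping, $h(\bx_0\mid\bx_t) - \frac{d}{2}\log\left(\frac{2\pi e}{d}\right) - \frac{d}{2}\log\Loss_{\bx_0, DSM, \bphi, t}$; rescaling by the factor implicit in matching the stated constant $C = \log\left(\frac{d}{2\pi e}\right) + \frac{2}{d}h(\bx_0\mid\bx_t)$ (the theorem's bound is the above inequality multiplied through by $\frac{2}{d}$, which is a legitimate operation only if one reads the claimed inequality up to that positive scaling — I would double-check the normalization and, if needed, state the cleaner unscaled version) gives the desired $I\left(\bx_0 ; E_{\bphi}(\bx_0)\mid\bx_t\right) \geq -\log\Loss_{\bx_0, DSM, \bphi, t} + C$.

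The main obstacle I anticipate is the interplay between the two inequalities and the normalization. The maximum-entropy step (Lemma~\ref{lemma:max entropy}) is applied pointwise and then the Jensen step moves the expectation through the logarithm in the favorable direction, so both go the right way — but I need to confirm that chaining them does not lose the factor of $d$ in a way that breaks the claimed constant, and in particular whether the statement intends $-\log$ of the loss or $-\frac{d}{2}\log$; reconciling the exact form of $C$ with the dimension-dependent prefactors is the delicate bookkeeping. A secondary subtlety is ensuring the conditional entropies are well-defined (finiteness of $h(\bx_0\mid\bx_t)$, absolute continuity of the relevant posteriors), which I would handle by assuming, consistently with the rest of the paper's setup, that $\bx_0$ has a density with finite differential entropy and that conditioning on $\bx_t$ (a Gaussian-convolved version) preserves this. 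Everything else — the entropy decomposition of conditional MI, linearity of expectation, and substitution of the Theorem~\ref{thm:analytic loss min} formula — is routine.
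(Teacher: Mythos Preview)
Your approach is essentially the same as the paper's: apply Lemma~\ref{lemma:max entropy} pointwise, push the expectation through via Jensen, and combine with the closed-form minimum from Theorem~\ref{thm:analytic loss min}. Your route is in fact a bit more direct---you use the identity $I(\bx_0;E_{\bphi}(\bx_0)\mid\bx_t)=h(\bx_0\mid\bx_t)-h(\bx_0\mid\bx_t,E_{\bphi}(\bx_0))$ immediately, whereas the paper detours through $I(\bx_0;\bx_t,E_{\bphi}(\bx_0))$ and the chain rule before arriving at the same inequality; and you apply Jensen to the concave $\log$ rather than the convex $\exp$, which is equivalent.

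Two small remarks. First, you treat $\Loss_{\bx_0,DSM,\bphi,t}$ as \emph{equal} to $\E[\Tr\Cov]$; to cover an arbitrary $\bx_{\btheta}$ (as the theorem is stated) you need only the inequality $\Loss_{\bx_0,DSM,\bphi,t}\geq\E[\Tr\Cov]$, which follows from the same argument as Theorem~\ref{thm:analytic loss min} and goes the right direction when substituted into the $\log$. Second, your worry about the $d/2$ prefactor is well-founded: the honest inequality your argument (and the paper's) produces is
\[
I(\bx_0;E_{\bphi}(\bx_0)\mid\bx_t)\;\geq\;h(\bx_0\mid\bx_t)+\tfrac{d}{2}\log\tfrac{d}{2\pi e}-\tfrac{d}{2}\log\Loss_{\bx_0,DSM,\bphi,t},
\]
and the paper's stated form (with $-\log\Loss$ and $\tfrac{2}{d}h(\bx_0\mid\bx_t)$) arises only after the paper silently drops a factor $\tfrac{2}{d}$ on $I$ when ``exponentiating both sides''. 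So your bookkeeping instinct is correct; state the unscaled version above as the clean result.
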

\begin{proof}
The proof begins by applying Lemma \ref{lemma:max entropy}, which relates conditional entropy to the trace of the conditional covariance matrix.
\begin{equation}
\begin{aligned}
&h(\bx_0 \mid \bx_t = \bx, E_{\bphi}(\bx_0) = \by) \\ 
\leq& \frac{d}{2} \left( 1 + \log \left( \frac{2\pi \Tr\left(\Cov\left[\bx_0 \mid \bx_t = \bx, E_{\bphi}(\bx_0) = \by\right]\right)}{d} \right) \right).\\
&\Tr\left(\Cov\left[\bx_0 \mid \bx_t = \bx, \, E_{\bphi}(\bx_0) = \by\right]\right) \\ 
\geq &\frac{d}{2 \pi e} \exp{\left(\frac{2h\left(\bx_0 \mid \bx_t = \bx, \, E_{\bphi}(\bx_0) = \by\right)}{d}\right)}.
\end{aligned}
\end{equation}
Taking the expectation over $\bx_0$ and $\bx_t$ on both sides of the trace inequality, and applying Jensen's inequality to the right-hand side (since the exponential function is convex), we obtain:
\begin{equation}
\begin{aligned}
&\E_{\bx_0, \bx_t}\left[\Tr\left(\Cov\left[\bx_0 \mid \bx_t, \, E_{\bphi}(\bx_0)\right]\right)\right]\\ 
\geq &\frac{d}{2 \pi e} \exp{\left(\frac{2h\left(\bx_0 \mid \bx_t, \, E_{\bphi}(\bx_0)\right)}{d}\right)}.
\end{aligned}
\end{equation}
This inequality can be rearranged to yield an upper bound for the conditional entropy $h(\bx_0 \mid \bx_t, E_{\bphi}(\bx_0))$:
\begin{equation}
\begin{aligned}
&h\left(\bx_0 \mid \bx_t, \, E_{\bphi}(\bx_0)\right)\\
\leq &\frac{d}{2}\,\log{\left(\frac{2\pi e}{d} \E_{\bx_0, \bx_t}\left[\Tr\left(\Cov\left[\bx_0 \mid \bx_t, \, E_{\bphi}(\bx_0)\right]\right)\right]\right)}.
\end{aligned}
\end{equation}
We arrive at the following lower bound for the mutual information:
\begin{equation}
\begin{aligned}
&I\left(\bx_0 \,;\, \bx_t, \, E_{\bphi}(\bx_0)\right) \\ 
=& h(\bx_0) - h\left(\bx_0 \mid \bx_t, \, E_{\bphi}(\bx_0)\right) \\ 
\geq& h(\bx_0) - \frac{d}{2}\,\log{\left(\frac{2\pi e}{d} \E_{\bx_0, \bx_t}\left[\Tr\left(\Cov\left[\bx_0 \mid \bx_t, \, E_{\bphi}(\bx_0)\right]\right)\right]\right)}. 
\end{aligned}
\end{equation}
We now apply the chain rule for mutual information:
\begin{equation}
I\left(\bx_0 \,;\, \bx_t, \, E_{\bphi}(\bx_0)\right) = I\left(\bx_0 \,;\, \bx_t\right) + I\left(\bx_0 \,;\, E_{\bphi}(\bx_0) \mid \bx_t\right).
\end{equation}
Substituting the chain rule,
\begin{equation}
\begin{aligned}
&\frac{d}{2}\,\log{\left(\frac{2\pi e}{d} \E_{\bx_0, \bx_t}\left[\Tr\left(\Cov\left[\bx_0 \mid \bx_t, \, E_{\bphi}(\bx_0)\right]\right)\right]\right)}\\
\geq &h(\bx_0) - I\left(\bx_0 \,;\, \bx_t\right) - I\left(\bx_0 \,;\, E_{\bphi}(\bx_0) \mid \bx_t\right)\\
\geq &h\left(\bx_0 \mid \bx_t\right) - I\left(\bx_0 \,;\, E_{\bphi}(\bx_0) \mid \bx_t\right).
\end{aligned}
\end{equation}
Exponentiating both sides and rearranging establishes the following lower bound on the expected trace term:
\begin{equation}
\begin{aligned}
&\E_{\bx_0, \bx_t}\left[\Tr\left(\Cov\left[\bx_0 \mid \bx_t, \, E_{\bphi}(\bx_0)\right]\right)\right]\\
\geq &\frac{d}{2\pi e} \exp{\left(\frac{2}{d} h\left(\bx_0 \mid \bx_t\right)\right)} \, \exp{\left(- I\left(\bx_0 \,;\, E_{\bphi}(\bx_0) \mid \bx_t\right) \right)}.
\end{aligned}
\end{equation}
The loss is always lower-bounded according to~\Cref{thm:lower loss}:
\begin{equation}
\Loss_{\bx_0, DSM, \bphi, t} \geq \E_{\bx_0, \bx_t}\left[\Tr\left(\Cov\left[\bx_0 \mid \bx_t, \, E_{\bphi}(\bx_0)\right]\right)\right].
\end{equation}
Thus
\begin{equation}
\begin{aligned}
&\Loss_{\bx_0, DSM, \bphi, t}\\
\geq &\frac{d}{2\pi e} \, \exp{\left(\frac{2}{d} h\left(\bx_0 \mid \bx_t\right)\right)} \, \exp{\left(- I\left(\bx_0 \,;\, E_{\bphi}(\bx_0) \mid \bx_t\right) \right)}.
\end{aligned}
\end{equation}
Finally, taking the logarithm of both sides of inequality and rearranging the terms leads to the result stated in the theorem:
\begin{equation}
\begin{aligned}
&I\left(\bx_0 \,;\, E_{\bphi}(\bx_0) \mid \bx_t\right)\\
\geq &-\log{\Loss_{\bx_0, DSM, \bphi, t}} + \log\left(\frac{d}{2\pi e}\right) + \frac{2}{d} \, h\left(\bx_0 \mid \bx_t\right).
\end{aligned}
\end{equation}
\end{proof}
\begin{figure}[t]
\centering 
\includegraphics[width=.45\textwidth,height=!]{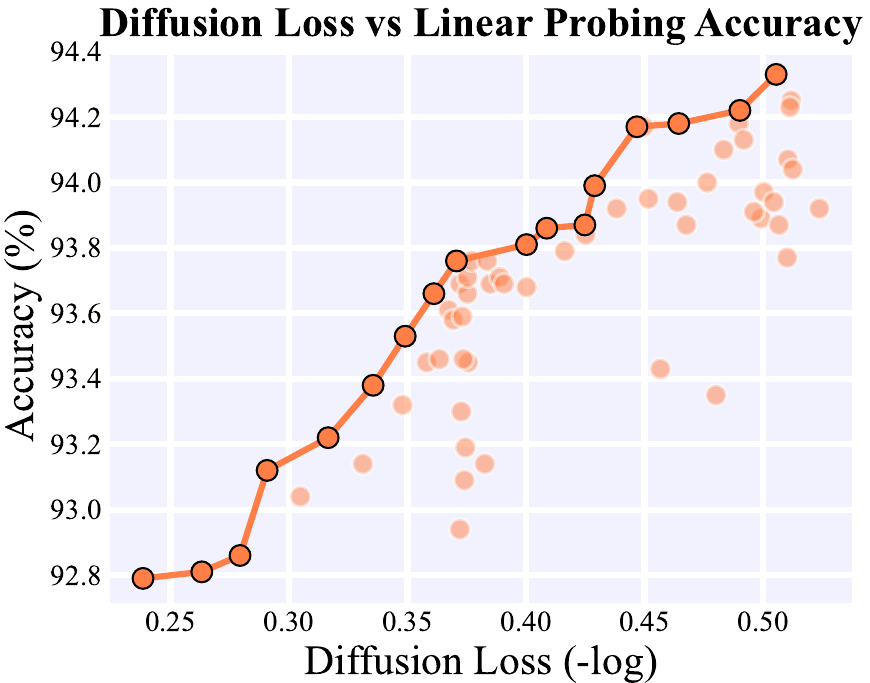}
\caption{The correlation between the negative logarithm of diffusion loss (\textbf{x-axis}) and linear probing accuracy (\textbf{y-axis}) on the Photo dataset.}
\label{fig:loss_acc}
\end{figure}

The proof is in Appendix A. 
Theorem~\ref{thm:diff infomax} indicates that minimizing the diffusion reconstruction objective is equivalent to maximizing a lower bound of conditional mutual information between data and feature. Figure~\ref{fig:loss_acc} illustrates the correlation between diffusion reconstruction loss and linear probing accuracy on downstream tasks. As the diffusion loss decreases, the lower bound of conditional mutual information increases, which in turn corresponds to higher linear probing accuracy. This supports our theory that a lower diffusion loss is associated with more informative representations, leading to improved performance in linear probing on downstream tasks.

\vspace{\topsep}
InfoMax principle\cite{linsker1988self, hjelm2018learning} proposes to choose a representation $f(\bx)$ by maximizing $I(\bx;f(\bx))$. Motivated by Theorem~\ref{thm:diff infomax}, we propose the Diff-InfoMax principle:
\vspace{\topsep}

\noindent\textbf{Diff-InfoMax Principle.} Choosing a representation $f(\bx)$ by maximizing $\int_0^T \lambda(t) I(\bx;f(\bx) | \bx_t) \mathrm{d}t$, where $\bx_t = \alpha_t \bx + \sigma_t \xi$ is a data corrupted by Gaussian Noise and $\lambda(t) \in \Real$ is a weighting function.

The first key distinction between the Diff-InfoMax principle and the original InfoMax principle is that Diff-InfoMax optimizes the conditional mutual information $I(\bx; f(\bx) | \bx_t)$, which quantifies the shared information between $\bx$ and $f(\bx)$, given the knowledge of $\bx_t$. The second difference lies in Diff-InfoMax's use of a multi-level criterion, encouraging the representation to maximize information about $\bx$ while excluding the information from $\bx_t$. By accounting for different noise levels in $\bx_t$, $I(\bx; f(\bx) | \bx_t)$ promotes the representation to capture varying levels of structural detail. Furthermore, we demonstrate that the original InfoMax principle is a special case of the proposed Diff-InfoMax principle.

\begin{remark}
    The original InfoMax principle is a special case of the Diff-InfoMax principle when $\lambda(t) = \delta_T(t)$: $\int_0^T \delta_T(t) I(\bx;f(\bx) | \bx_t) \mathrm{d}t = I(\bx;f(\bx) | \bx_T) = I(\bx;f(\bx))$ because $\bx_T$ is a Gaussian noise independent with $\bx$ and $f(\bx)$.
\end{remark}

Similar to MI, estimating conditional MI is particularly challenging in high-dimensional spaces. We address this by optimizing a tractable lower bound of conditional MI, specifically the DRL objective. We believe the Diff-InfoMax principle opens up new avenues for integrating diffusion models with representation learning. Moreover, there are alternative methods for optimizing variational lower bounds of the conditional MI objective, which we reserve for future exploration.

\subsection{Effects on Frequency Domain}
\paragraph{Frequency-aware Analysis}Several works\cite{yang2023diffusion, si2024freeu, dieleman2024spectral} have noted that during the noising process, the high-frequency components of the data are corrupted first, followed by the low-frequency components. Conversely, in the generation process, low-frequency components are generated initially, with high-frequency components added later. Then the diffusion model performs a role generating high-frequency components given noisy data which mainly consists of low-frequency data. From this frequency domain perspective, $I(\bx; f(\bx) | \bx_t)$ guides the feature extractor to focus on components with frequencies exceeding a certain threshold, with different time $t$ corresponding to different frequency thresholds.
\paragraph{Graph Feature}BWGNN\cite{tang2022rethinking} defines a metric \textit{Energy Ratio} to assess the concentration of graph features in low frequencies. They observe that perturbing graph features with random noise results in a 'right-shift' of energy, indicating a reduced concentration in low frequencies and an increased concentration in high frequencies. This finding aligns with our analysis of the frequency domain. Consequently, DRL operates in the spectral space of graph features, excelling at capturing high-frequency information in these features."
\section{The \themodel Approach}

As inspired by the above theoretical insights and to overcome the challenges mentioned in \Cref{sec:intro}, the \themodel framework follows the \emph{explicit-encoder pattern} and couples a graph encoder $E_{\bphi}$ with a conditional diffusion decoder $D_{\theta}$. Given an input graph $\mathcal{G}=(\mathbf{X}, \mathbf{A})$, the encoder achieves perception of both structural and feature information and extracts a compact representation $\mathbf{z}=E_{\bphi}(\mathcal{G})$ for each node. Then, the decoder receives both noisy feature $\bx_t$ and encoded representation $\mathbf{z}$ to reconstruct the original feature $\tilde{\bx}=D_{\theta}(\bx_t, t, \mathbf{z})$. The overall framework is demonstrated in \Cref{fig:framework}. We next introduce the \themodel in detail. 

\subsection{The Graph Encoder}

The encoder module is the core part of our model. Since we are not concerned with generative capabilities, the encoder is the only parameterized module used in downstream tasks, and its capability directly impacts task performance. 
We consider two factors that guide the training lean toward representation learning: one is the \textbf{expressive capacity of the encoder}, which refers to whether it can fully perceive graph data to provide strong representations. 
The other is the \textbf{adequacy of encoder training}, which involves whether the optimization of the objective function can effectively coordinate the optimization of both the encoder and decoder.

For the first factor,  we follow prior work \cite{hou2022graphmae, hou2023graphmae2, zhao2024masked} on the encoder selection, which adopted GAT \cite{velickovic2017graph} and GIN \cite{xu2018powerful} for node and graph tasks, respectively, as both theoretical and empirical evidence demonstrate that they have strong expressive capabilities for graph tasks. This also ensures fair comparison in subsequent experimental analysis. Specifically, their message-passing mechanism can be expressed as:
\begin{equation}
h_v^{(k)} = \mathbf{\mathtt{COMB}}\left( h_v^{(k-1)},\mathbf{\mathtt{AGGR}}\{h_u^{(k-1)}:u \in \mathcal{N}(v)\}\right),
\end{equation}
where $1 \leq k \leq L$ and $h_v^{(k)}$ denotes representation of node $v$ at the $k$-th layer, \(\mathcal{N}(v)\) is the set of neighboring nodes connected to node \(v\) and \(L\) is the number of layers. \( \mathbf{\mathtt{AGGR}}(\cdot)\) and \( \mathbf{\mathtt{COMB}}(\cdot)\) are used for aggregating neighborhood information and combining ego- and neighbor-representations, respectively. For graph-level tasks, the \( \mathbf{\mathtt{READOUT}}(\cdot)\) function aggregates node features from the final iteration to  obtain the entire graph’s representation.

It is worth noting that even given a powerful representation learner, there is a potential risk that the model training may tend to ignore the information in $\mathbf{z}$. This is because the input $\bx$ to the encoder and the reconstruction target by the decoder are the same, which might lead the model to learn a \emph{"shortcut"}. Consider an extreme case where the encoder performs an identity matrix mapping $E_{\bphi}(\cdot) = \mathcal{I(\cdot)}$ on the input features, the optimization objective transforms to $\Loss_{\bx_0, DSM} = \E_t\left\{\lambda(t) \E_{\bx_0} \E_{\bx_t | \bx_0} \left[ \|\bx_{\btheta}(\bx_t, t, \bx_0) - \bx_0 \|^2 \right] \right\}$.
In this scenario, the encoder obtains a poor capability to extract graph semantics, since the loss can easily approach zero. To this end, we randomly zero out partial node features before inputting them into the encoder. 

Formally, let \( \mathbf{X} \in \mathbb{R}^{n \times d} \) be a feature matrix. Define a masking vector \( h_{[mask]} \) consisting of \( n \) Bernoulli random variables with probability $m$, then the modified matrix \( \bf{X'} \) can be expressed as:
\begin{equation}
    h_{[mask]} \sim \text{Bernoulli}(1 - m)^n, \quad
    \mathbf{X'} = \text{diag}(h_{[mask]}) \mathbf{X}.
\end{equation}
Using corrupted node features as input not only effectively prevents the model from learning shortcuts, but also reduces redundancy in attributed graphs. This approach essentially creates a more challenging self-supervision task for learning robust and meaningful representations.

\subsection{The Diffusion Decoder} 
\label{sec:decoder}
\paragraph{Reconstruction objective.} Unlike image features, graph data incorporates feature and structural information, prompting the question of which to prioritize for reconstruction. Previous work in graph SSL has explored both directions: for example, GraphMAE \cite{hou2022graphmae} focuses only on feature information, while another concurrent work, MaskGAE \cite{li2023what}, only targets topological attributes. It is worth noting that in many graph learning datasets, features are often one-hot embeddings, and topology is represented by adjacency matrices—both of which are highly sparse, thus making it difficult to make decisions based on the nature of data. We empirically tested reconstructing features, topology, and their combination. Results in \Cref{tab:ablation} demonstrate that feature reconstruction performs best, outperforming the hybrid approach, with topology-only reconstruction yielding the worst results. Therefore, we choose features \(\bx\) as the target for reconstruction.

\begin{table*}[t]
    \centering
    \caption{Empirical performance of self-supervised representation learning for \underline{node classification} in terms of accuracy (\%, \(\uparrow\)). We highlight the best- and the second-best performing results in \textbf{boldface} and \underline{underlined}, respectively.
    }
    \resizebox{0.9\textwidth}{!}{
    \begin{tabular}{@{}c|c|cccccc@{}}
        \toprule[1.2pt]
            & Dataset &   Cora      & CiteSeer      & PubMed                & Ogbn-arxiv        & Computer               & Photo       \\
         \midrule
        \multirow{2}{*}{Supervised} 
        & GCN     &  81.5$\pm$0.5  & 70.3$\pm$0.7  & 79.0$\pm$0.4                   & 71.7$\pm$0.3    & 86.5$\pm$0.5    & 92.4$\pm$0.2          \\
        & GAT     &  83.0$\pm$0.7  & 72.5$\pm$0.7  & 79.0$\pm$0.3           & 72.1$\pm$0.1     & 86.9$\pm$0.3    & 92.6$\pm$0.4           \\
        \midrule
        \multirow{15}{*}{Self-supervised} 
        & GAE     &  71.5$\pm$0.4  & 65.8$\pm$0.4  & 72.1$\pm$0.5           & 63.6$\pm$0.5               &  85.1 $\pm$ 0.4        & 91.0$\pm$0.2 \\
        & GPT-GNN &  80.1$\pm$1.0  & 68.4$\pm$1.6  & 76.3$\pm$0.8 & - & - & -\\
        & GATE    &  83.2$\pm$0.6  & 71.8$\pm$0.8  & 80.9$\pm$0.3           & -                 & -             & -   \\ 
        & DGI     &  82.3$\pm$0.6  & 71.8$\pm$0.7  & 76.8$\pm$0.6           & 70.3$\pm$0.2 & 84.0$\pm$0.5     & 91.6$\pm$0.2 \\
        & MVGRL   & 83.5$\pm$0.4   & 73.3$\pm$0.5  & 80.1$\pm$0.7           & -               & 87.5$\pm$0.1 & 91.7$\pm$0.1 \\
        & GRACE   & 81.9$\pm$0.4   & 71.2$\pm$0.5  & 80.6$\pm$0.4           & 71.5$\pm$0.1  & 86.3$\pm$0.3  &    92.2$\pm$0.2\\  
        & BGRL    & 82.7$\pm$0.6   & 71.1$\pm$0.8  & 79.6$\pm$0.5           & 71.6$\pm$0.1  & 89.7$\pm$0.3  & 92.9$\pm$0.3         \\
        & InfoGCL  & 83.5$\pm$0.3   & \underline{73.5} $\pm$0.4  & 79.1$\pm$0.2  & - & - & - \\
        & CCA-SSG & 84.0$\pm$0.4   & 73.1$\pm$0.3  & 81.0$\pm$0.4  & 71.2$\pm$0.2  & 88.7$\pm$0.3  & 93.1$\pm$0.1   \\
        & GraphMAE& 84.2$\pm$0.4 & 73.4$\pm$0.4 &  81.1$\pm$0.4 & 71.8$\pm$0.2 & 88.6$\pm$0.2 & 93.6  $\pm$ 0.2 \\
        & GraphMAE2& 84.1$\pm$0.6 & 73.1$\pm$0.4 & 80.9$\pm$0.5 & \underline{71.8$\pm$0.0} & 89.2$\pm$0.4 & 93.3 $\pm$ 0.2 \\
        & AUG-MAE& 84.3$\pm$0.4 & 73.2$\pm$0.4 & 81.4$\pm$0.4 & \underline{71.9$\pm$0.2} & 89.4$\pm$0.2 & 93.1 $\pm$ 0.3 \\
        & MaskGAE$_{edge}$& 83.8$\pm$0.3 & 72.9$\pm$0.2 & 82.7$\pm$0.3 & 71.0$\pm$0.3 & 89.4$\pm$0.1 & 93.3 $\pm$ 0.0 \\
        & MaskGAE$_{path}$& 84.3$\pm$0.3 & 73.8$\pm$0.8 & \underline{83.6$\pm$0.5} & 71.2$\pm$0.3 & 89.5$\pm$0.1 & 93.3 $\pm$ 0.1 \\
        & DDM     & 83.4$\pm$0.2 & 72.5$\pm$0.3 & 79.6$\pm$0.8 & 71.3$\pm$0.2 & \underline{89.9$\pm$0.2} & \underline{93.8$\pm$0.2} \\
        & Bandana & \underline{84.5$\pm$0.3} & 73.6$\pm$0.2 & \bf 83.7$\pm$0.5 & 71.1$\pm$0.2 & 89.6$\pm$0.1 & 93.4 $\pm$ 0.1 \\
        \cmidrule{2-8}
         & \mk\themodel   & \mk\bf 84.8$\pm$0.4  & \mk\bf{74.3$\pm$0.4}  & \mk81.0$\pm$0.6  & \mk\bf 72.1$\pm$0.2 & \mk\bf 91.3$\pm$0.2    & \mk\bf94.2$\pm$0.1\\
        \bottomrule[1.2pt]
        \end{tabular}
        }
        \label{tab:node_clf}
\end{table*}

\paragraph{Customized instantiation of decoder.}
In decoder design, we draw on the experience of using the U-Net architecture from the visual domain as a backbone model for diffusion training. The U-Net architecture \cite{ronneberger2015unet} provides representations of different granularities through up- and down-sampling \cite{si2024freeu}. Additionally, it aligns well with the strict dimensional requirements of diffusion models. Specifically, when handling graph-level tasks, we propose Graph-UNet, which adopts GNN layers to replace the convolutional layers in the vanilla U-Net. In this context, each graph in a mini-batch can be likened to an image in a visual diffusion model; by uniformly sampling time step \(t \sim \text{Uniform}(0,T)\) within a mini-batch, we ensure that the level of feature noise within each graph remains consistent.

However, for node-level tasks, if we instantiate the decoder with GNNs, it becomes problematic to use different time steps for different nodes, as this would lead to message passing propagating node information at varying noise levels.  Therefore, to enable the model to clearly perceive distinct noise levels and conduct training in a principled manner, we replace the GNN layers with the MLP network. 

\paragraph{Architecture of Graph-Unet}
As illustrated on the right side of \Cref{fig:framework}, our decoder adopts a UNet-like architecture, comprising a contracting path (left side) and an expansive path (right side). However, since up-sampling and down-sampling operations cannot be directly applied to graph data, we instead represent the granularity of modeling through dimensional reduction and expansion. 
Specifically, due to the requirement of the diffusion model that the input and output dimensions match the original feature dimensions, we introduce additional input and output layers to perform dimensional mappings. In the contracting path, repeated dimensional reduction is performed using either GNN layers or MLP layers, depending on different task types, which halves the number of hidden dimensions at each step. In the expansive path, dimensional expansion is repeated, but before each mapping, the hidden state of the corresponding contracting path with the same dimension is added via skip connections, which differs from the original UNet’s concatenation.

It is also important to note that, in addition to the noisy data \(\bx_t\), the decoder also receives the condition $\mathbf{z}$ and time $t$ as inputs. We encode the time information using two linear layers with SiLU activation \cite{elfwing2018sigmoid}, and employ positional encoding to enable the model to distinguish temporal order. Furthermore, a key challenge is how to fuse \(\bx_t\), \(\mathbf{z}\), and \(t\). Based on experimental results, the optimal approach for node-level tasks is to directly sum these three components after encoding, as shown below:
\begin{equation}
    \mathbf{h}^{(l+1)} = \mathbf{h}^{(l)} + \mathbf{\mathtt{MLP}}_t(t) + \mathbf{\mathtt{MLP}}_z(\mathbf{z})
\end{equation}
where $\mathbf{\mathtt{MLP}}_t(\cdot)$ and $\mathbf{\mathtt{MLP}}_z(\cdot)$ are both MLP layer to achieve dimensional mapping.

For graph-level tasks, we follow the approach commonly used in the field of computer vision, utilizing Adaptive Normalization layers \cite{dhariwal2021diffusion, hudson2024soda} to fuse the three components:
\begin{align*}
    \mathbf{h}^{(l+1)} &= \mathtt{AdaNorm}(\mathbf{h}^{(l)}, \mathbf{z}, t) \\
    &=\mathbf{z}_s(t_s\mathtt{LayerNorm}(\mathbf{h}^{(l)})+t_b)+\mathbf{z}_b
\end{align*}
where $(t_s, t_b)$ and $(\mathbf{z}_s, \mathbf{z}_b)$ are obtained by linear projection.

\section{Experiments}
\begin{table*}[t]
    \setlength{\tabcolsep}{10pt}
    \centering
    \caption{Experiment results in self-supervised representation learning for \underline{graph classification}. \textmd{We report accuracy (\%) for all datasets.} We highlight the best- and the second-best performing results in \textbf{boldface} and \underline{underlined}, respectively.
    }
    \resizebox{0.9\textwidth}{!}{ 
    \begin{tabular}{c|c|ccccc}
        \toprule[1.2pt]
              & Dataset  & IMDB-B     & IMDB-M     & PROTEINS   & COLLAB     & MUTAG       \\ 

        \midrule
        \multirow{2}{*}{Supervised}
        & GIN         & 75.1$\pm$5.1   & 52.3$\pm$2.8   & 76.2$\pm$2.8   & 80.2$\pm$1.9   & 89.4$\pm$5.6     \\ 
        & DiffPool    & 72.6$\pm$3.9 &  -           &  75.1$\pm$3.5   & 78.9$\pm$2.3 & 85.0$\pm$10.3 \\
        \midrule
        \multirow{2}{*}{Graph Kernels}
        & WL          & 72.30$\pm$3.44 & 46.95$\pm$0.46 & 72.92$\pm$0.56 & - & 80.72$\pm$3.00 \\ 
        & DGK         & 66.96$\pm$0.56 & 44.55$\pm$0.52 & 73.30$\pm$0.82 & - & 87.44$\pm$2.72 \\ 
        \midrule
        \multirow{9}{*}{Self-supervised}
        & graph2vec   & 71.10$\pm$0.54 & 50.44$\pm$0.87 & 73.30$\pm$2.05 & -              & 83.15$\pm$9.25 \\   
        
        & Infograph   & 73.03$\pm$0.87 & 49.69$\pm$0.53 & 74.44$\pm$0.31 & 70.65$\pm$1.13 & 89.01$\pm$1.13 \\
        
        & GraphCL     & 71.14$\pm$0.44 & 48.58$\pm$0.67 & 74.39$\pm$0.45 & 71.36$\pm$1.15 & 86.80$\pm$1.34 \\
        
        & JOAO        & 70.21$\pm$3.08 & 49.20$\pm$0.77     & \underline{74.55$\pm$0.41} & 69.50$\pm$0.36 & 87.35$\pm$1.02 \\
        & GCC         & 72.0           & 49.4           & -    & 78.9    &  - \\
        & MVGRL       & 74.20$\pm$0.70   & 51.20$\pm$0.50   & -              & -              & {89.70$\pm$1.10}  \\
        & InfoGCL     & {75.10$\pm$0.90}   & {51.40$\pm$0.80}   &  -   & {80.00$\pm$1.30}   &  \underline{91.20$\pm$1.30}  \\
        & GraphMAE      &  75.52$\pm$0.66 &  51.63$\pm$0.52 &  \underline{75.30$\pm$0.39} &  80.32$\pm$0.46 & 88.19$\pm$1.26 \\
        & AUG-MAE      &  \underline{75.56$\pm$0.61} &  51.80$\pm$0.86 &  \bf{75.83$\pm$0.24}&  80.48$\pm$0.50 & 88.28$\pm$0.98 \\
        & DDM        & 74.05$\pm$0.17    &  \underline{52.02$\pm$0.29}    & 71.61$\pm$0.56     & \underline{80.70$\pm$0.18}   &90.15$\pm$0.46  \\
        \cmidrule{2-7}
        & \mk\themodel        & \mk\bf{76.20$\pm$0.23}    & \mk\bf{52.4$\pm$0.37}    & \mk74.36$\pm$0.12     & \mk\bf{81.28$\pm$0.15}        &\mk\bf{91.46$\pm$0.26}  \\
        \bottomrule[1.2pt]
    \end{tabular}
    }
    \label{tab:graph_clf}
\end{table*}
\subsection{Experimental Setup}

\textbf{Datasets.} 
Our experiments primarily involve node-level and graph-level datasets. 
For node classification tasks, we select 6 datasets drawn from various domains for evaluation. These include three citation networks: \texttt{Cora}, \texttt{CiteSeer}, and \texttt{PubMed} \cite{sen2008collective}; two co-purchase graphs: \texttt{Photo} and \texttt{Computer} \cite{shchur2018pitfalls}; and a large dataset from the Open Graph Benchmark: \texttt{arXiv} \cite{hu2020open}. The evaluation datasets represent real-world networks and graphs from diverse fields.
For graph classification tasks, we select 5 datasets for training and testing: \texttt{IMDB-B}, \texttt{IMDB-M}, \texttt{PROTEINS}, \texttt{COLLAB}, and \texttt{MUTAG} \cite{yanardag2015deep}. Each dataset comprises a collection of graphs, with each graph assigned a label. In graph classification tasks, the node degrees are used as attributes for all datasets. These features are processed using one-hot encoding as input to the model. 

\textbf{Evaluation protocols.}
We follow the experimental settings from \cite{hassani2020contrastive,velickovic2019deep}. First, we train a GNN encoder and a decoder using the proposed \themodel in an unsupervised manner. Then, we freeze the encoder parameters to infer the node representations. We train a linear classifier to evaluate the representation quality and report the average accuracy on test nodes over 20 random initializations. For node classification tasks, we use the public data splits of \texttt{Cora}, \texttt{Citeseer}, and \texttt{PubMed} as specified in \cite{hassani2020contrastive, thakoor2021large, velickovic2019deep} and adopt GAT \cite{velickovic2017graph} as the graph encoder. For graph classification tasks, we follow the experimental setup by \cite{hou2022graphmae} and adopt the GIN \cite{xu2018powerful} as the graph encoder. We feed the graph-level representations into the downstream LIBSVM classifier \cite{chang2001libsvm} to predict labels. The average 10-fold cross-validation accuracy and standard deviation after 5 runs.

\textbf{Implementation details.} 
In our study, we employ either Adam \cite{kingma2014adam} or AdamW \cite{loshchilov2017decoupled} as the optimizer, complemented by a cosine annealing scheduler \cite{loshchilov2016sgdr} to enhance model convergence across different datasets. Moreover, we configure the learning rate for the encoder to be twice that of the decoder, a strategy that has demonstrated empirical effectiveness in promoting training stability. In terms of the noise schedule, we explore several candidate approaches, including sigmoid, linear, and inverted schedules, ultimately selecting the most appropriate method based on their performance for each dataset.

\subsection{Node Classification}

For comprehensive comparison, we select the following three groups of SSL methods as primary baselines in our experiments.
\ding{172} Auto-encoding methods: GAE \cite{kipf2016variational}, GATE \cite{salehi2019graph}, GraphMAE\cite{hou2022graphmae}, GraphMAE2\cite{hou2023graphmae2}, MaskGAE\cite{li2023what},
AUG-MAE\cite{wang2024rethinking},
Bandana\cite{zhao2024masked}
\ding{173} Contrastive methods: GRACE \cite{zhu2021graph}, CCA-SSG \cite{zhang2021canonical}, InfoGCL \cite{xu2021infogcl}, DGI\cite{velickovic2019deep}, MVGRL \cite{hassani2020contrastive}, BGRL \cite{thakoor2021large}, GCC \cite{qiu2020gcc}
\ding{174} Others: GPT-GNN \cite{hu2020gpt}, DDM \cite{yang2024directional}. 
Detailed hyper-parameter configurations are provided in Appendix B.
The performance of 6 linear probing node classification tasks is summarized in \Cref{tab:node_clf}. The results not reported are due to unavailable code or out-of-memory.
Generally, it can be found from the table that our \themodel shows strong empirical performance across all datasets, delivering five out of six state-of-the-art results.
The outstanding results validate the superiority of our proposed model.

We make other observations as follows: \textbf{\emph{(i)}} Note that previous work has already achieved pretty high performance. For example, the current state-of-the-art DDM only obtains a 0.24\% absolute improvement over the second-best baseline, Bandana,  in terms of average accuracy on the \texttt{Computer} dataset. Our work pushes that boundary with absolute improvement up to 1.46\% over DDM. \textbf{\emph{(ii)}} Our method surpasses the supervised training baseline on almost all tasks. For instance, in the \texttt{Computer} dataset, the GAT baseline achieves an accuracy of 86.9 under fully supervised training; however, \themodel improves upon this by 4.4 percentage points. Interestingly, this further corroborates our theoretical findings presented in \Cref{sec:extrac_info} and illustrated in \Cref{fig:loss_comp}. The consistency between our empirical results and theoretical analysis reinforces the robustness of our model. It demonstrates that our proposed model can obtain meaningful and high-quality embeddings.

\subsection{Graph Classification}

For graph classification tasks, we further include the graph kernel methods \cite{shervashidze2011weisfeiler, yanardag2015deep} and graph2vec \cite{narayanan2017graph2vec} following \cite{hou2022graphmae}. 
Detailed hyper-parameter configurations are provided in Appendix B. 
The performance of \themodel on 5 datasets is summarized in \Cref{tab:graph_clf}.  It can be observed that our method demonstrates performant results on different tasks, achieving state-of-the-art results on 4 out of 5 datasets. This further indicates that \themodel, as a new class of generative SSL, holds significant potential in representation learning. Furthermore, similar to observations in node classification, our method also outperforms fully supervised counterparts.

\subsection{Ablation Study}

\paragraph{Effect of different components}
To demonstrate the necessity of each module in our model, we conduct ablation study to validate the different components of \themodel. Specifically, we consider three aspects for ablation: reconstruction objectives, masking strategies, and decoder selection. We select \texttt{Cora}, \texttt{Computer}, and \texttt{Photo} for node-level tasks, and \texttt{IMDB-B}, \texttt{COLLAB}, and \texttt{MUTAG} for graph-level tasks. 
The experimental results are presented in \Cref{tab:ablation}. 
\begin{table}[h]
    \centering
    \caption{Ablation of different components.}
    \resizebox{0.45\textwidth}{!}{ 
    \begin{tabular}{c|ccc}
        \toprule[1.2pt]
        Node-level  & Cora     & Computer & Photo\\ 
        \midrule
        $\bf A$ Recons.       & 77.6 & 86.2 & 91.7\\
        $\bf A+X$ Recons.      & 80.1 & 87.4 & 92.2\\
        w/o Mask               & 82.5 & 88.5 & 92.5\\
        w. GAT decoder         & 83.2 & 89.8 & 92.9\\
        \mk\themodel         & \mk\bf84.8 & \mk\bf91.3 & \mk\bf94.2\\
        \bottomrule
        \toprule
        Graph-level  & IMDB-B     & COLLAB & MUTAG\\ 
        \midrule
        $\bf A$ Recons.       &70.2	&71.5	&83.6\\
        $\bf A+X$ Recons.     &71.6	&77.6	&86.8\\
        w/o Mask              &75.8	&81.2	&91.5\\
        w. MLP decoder        &74.5	&79.9	&88.5\\
        \mk\themodel             &\mk\bf 76.2	&\mk\bf 81.3 &\mk\bf 91.5\\
        \bottomrule[1.2pt]
    \end{tabular}
    }
    \label{tab:ablation}
\end{table}

Our observations are as follows: 
\textbf{\emph{(i)}} The performance of reconstructing only feature (i.e., the Graffe model) surpasses that of the mixed reconstruction, with the worst performance occurring when reconstructing only topology. This suggests that explicitly reconstructing structural information leads to performance degradation. 
\textbf{\emph{(ii)}} The masking strategy is particularly critical for node-level tasks, as its removal results in significant performance drops, while the impact is less noticeable for graph-level tasks. 
\textbf{\emph{(iii)}} The choice of decoder layers is critical for different task types. For node-level tasks, using an MLP layer yields better results compared to a GAT layer, while the opposite is true for graph-level tasks. This aligns with our intuitive analysis in \Cref{sec:decoder}, indicating that the propagation of noise is detrimental to diffusion representation learning.

\paragraph{Effect of mask ratio}

Since mask strategy is a crucial component of our framework, it is necessary to evaluate how to choose a proper $m$. We conduct an empirical analysis on \texttt{Cora}, \texttt{Computer} and \texttt{MUTAG} dataset and consider a candidate list covering the value ranges of $m$: [0, 0.1, 0.3, 0.5, 0.7, 0.9].
As shown in \Cref{fig:mask}, the optimal masking choice varies across different datasets. 
For the \texttt{Cora} and \texttt{Computer} datasets, the best performance is achieved when \(m = 0.7\), whereas on the \texttt{MUTAG} dataset, the best results are obtained without applying any masking. 
Moreover, a higher mask ratio even leads to performance decline on graph-level tasks. This suggests that the selection of the mask ratio should be tuned according to the specific tasks, as there is no one-size-fits-all solution.
\begin{figure}[t]
\centering
\includegraphics[width=0.4\textwidth]{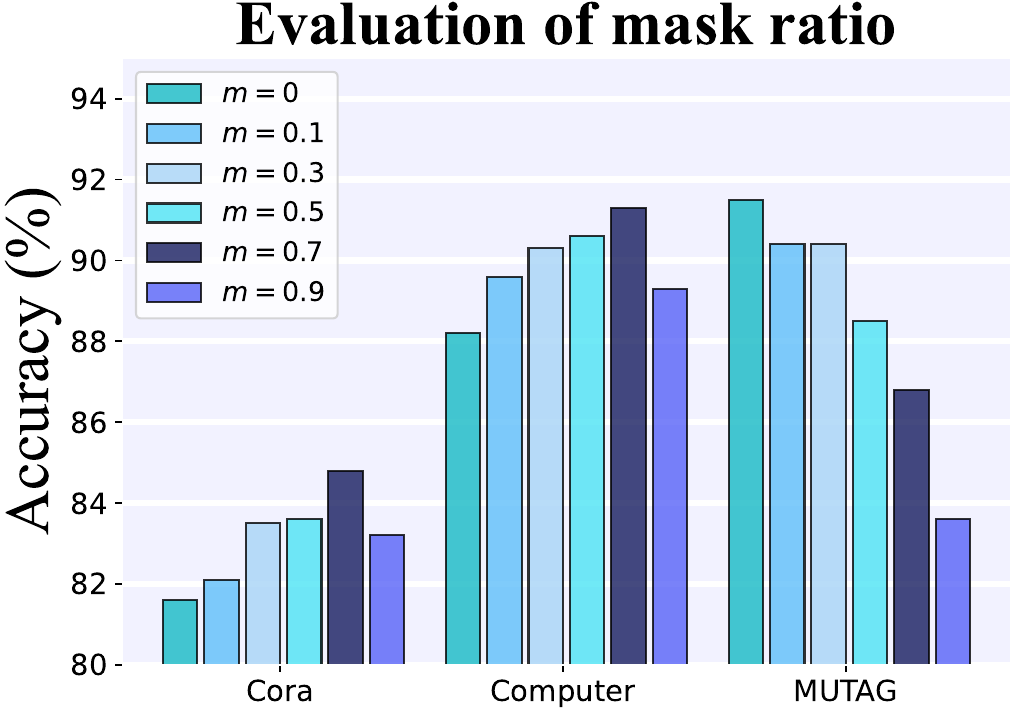}
\caption{The effect of mask ratio $m$ on \texttt{Cora}, \texttt{Computer} and \texttt{MUTAG} dataset.}
\label{fig:mask}
\vspace{-10pt}
\end{figure}

\paragraph{Ablation study on encoder backbone}

To evaluate how much impact the choice of encoder has on the performance of \themodel and other baselines, we conduct ablation studies on the encoder backbone using three classic datasets: \texttt{Cora}, \texttt{Citeseer}, and \texttt{Computer}. We chose GRACE \cite{zhu2021graph} and CCA-SSG  \cite{zhang2021canonical} as baselines for contrastive learning and GraphMAE \cite{hou2022graphmae}, MaskGAE \cite{li2023what}, and Bandana \cite{zhao2024masked} as baselines for the MAE family. The results are shown in \Cref{tab:abl_encoder}.

\begin{table}[h!]
    \centering
    \renewcommand{\arraystretch}{1.2}
    \caption{Ablation study on different encoder design.}
    \resizebox{0.45\textwidth}{!}{ 
    \begin{tabular}{@{}lcccccc@{}}
        \toprule[1.2pt]
        \multirow{2}{*}{Method} & \multicolumn{2}{c}{Cora} & \multicolumn{2}{c}{Citeseer} & \multicolumn{2}{c}{Computer} \\ 
        \cmidrule(r){2-3} \cmidrule(r){4-5} \cmidrule(r){6-7}
        & GCN         & GAT         & GCN         & GAT         & GCN         & GAT  \\ 
        \midrule[1pt]
        GRACE                            & 81.9 & 81.0 & 71.2 & 71.5 & 86.3 & 86.2 \\
        GraphMAE                         & 82.5 & 84.2 & 72.6 & 73.4 & 86.5 & 88.6 \\
        CCA-SSG                          & 84.0 & 82.7 & 73.1 & 72.3 & 88.7 & 85.5 \\
        MaskGAE$_{edge}$                & 83.8 & 82.0 & 72.9 & 72.0 & 89.4 & 87.7 \\
        Bandana                          & \bf84.5 & 83.1 & \bf73.6 & 73.7 & 89.6 & 89.2 \\
        \mk\themodel                    & \mk83.2 & \mk\bf84.8 & \mk73.2 & \mk\bf74.3 & \mk\bf90.8 & \mk\bf91.3 \\
        \bottomrule[1.2pt]
        \label{tab:abl_encoder}
    \end{tabular}
    }
\end{table}

The results show significant performance declines for many methods when substituting GCN for GAT, such as CCA-SSG, MaskGAE, and Bandana on Cora and Citeseer datasets, which also aligns with observations in MaskGAE \cite{li2023what}. In contrast, for GraphMAE and \themodel, switching their GAT backbones to GCN also causes a drop in performance. We believe different SSL methods have distinct encoder preferences and using GAT or GCN as the encoder in graph SSL is not universally optimal.

\paragraph{Ablation study on Graph-Unet backbone}
As mentioned in \Cref{sec:decoder}, we chose the Unet structure because it can capture information at different granularities while strictly ensuring input-output dimensional consistency. During our early exploration, we also tested using a simple MLP or GNN as the decoder. The experimental results on Cora, Photo, and IMDB-B datasets are shown in \Cref{tab:abl_decoder}. It is worth noting that the GNN decoder adopts the same architecture as the encoder: GAT for node-level tasks and GIN for graph-level tasks.

\begin{table}[h]
    \centering
    \caption{Ablation study on different decoder design.}
     \setlength{\tabcolsep}{7pt}
     \renewcommand{\arraystretch}{1.2}
     \resizebox{0.45\textwidth}{!}{
    \begin{tabular}{lccc}
        \toprule[1.2pt]
        Decoder      & Cora       & Computer   & IMDB-B      \\ 
        \midrule
        MLP                    & 82.6$\pm$0.5        & 89.1$\pm$0.1        & 75.0$\pm$0.6         \\ 
        GNN (GAT/GIN)          & 80.2$\pm$0.3        & 88.1$\pm$0.1        & 74.5$\pm$0.5         \\ 
        \mk{Graph-Unet}             & \mk\bf84.8$\pm$0.4        & \mk\bf91.3$\pm$0.2        & \mk\bf76.2$\pm$0.2         \\ 
        \bottomrule[1.2pt]
        \label{tab:abl_decoder}
    \end{tabular}
    }
\end{table}

We can observe that using either an MLP or GNN as the decoder results in significantly poorer performance compared to the Graph-Unet. Moreover, for node-level tasks, employing a GNN as the decoder leads to a substantial performance drop. This observation aligns with our analysis in \Cref{sec:decoder}, where GNNs can cause interference among nodes due to varying degrees of noise introduced during the diffusion process.
\section{Conclusion}

In this paper, we introduce \themodel, a self-supervised diffusion representation learning (DRL) framework designed for graphs, achieving state-of-the-art performance on self-supervised graph representation learning tasks. We establish the theoretical foundations of DRL and prove that the denoising objective is a lower bound for the conditional mutual information between data and its representations. We propose the Diff-InfoMax principle, an extension of the standard InfoMax principle, and demonstrate that DRL implicitly follows it. Based on these theoretical insights and customized design for graph data, \themodel excels in node and graph classification tasks.

\bibliographystyle{IEEEtran}
\bibliography{references}

\begin{thebibliography}{10}
\providecommand{\url}[1]{#1}
\csname url@samestyle\endcsname
\providecommand{\newblock}{\relax}
\providecommand{\bibinfo}[2]{#2}
\providecommand{\BIBentrySTDinterwordspacing}{\spaceskip=0pt\relax}
\providecommand{\BIBentryALTinterwordstretchfactor}{4}
\providecommand{\BIBentryALTinterwordspacing}{\spaceskip=\fontdimen2\font plus
\BIBentryALTinterwordstretchfactor\fontdimen3\font minus \fontdimen4\font\relax}
\providecommand{\BIBforeignlanguage}[2]{{%
\expandafter\ifx\csname l@#1\endcsname\relax
\typeout{** WARNING: IEEEtran.bst: No hyphenation pattern has been}%
\typeout{** loaded for the language `#1'. Using the pattern for}%
\typeout{** the default language instead.}%
\else
\language=\csname l@#1\endcsname
\fi
#2}}
\providecommand{\BIBdecl}{\relax}
\BIBdecl

\bibitem{hinton2006reducing}
G.~E. Hinton and R.~R. Salakhutdinov, ``Reducing the dimensionality of data with neural networks,'' \emph{science}, vol. 313, no. 5786, pp. 504--507, 2006.

\bibitem{kingma2013auto}
D.~P. Kingma, ``Auto-encoding variational bayes,'' \emph{arXiv preprint arXiv:1312.6114}, 2013.

\bibitem{radford2018improving}
A.~Radford, ``Improving language understanding by generative pre-training,'' 2018.

\bibitem{goodfellow2020generative}
I.~Goodfellow, J.~Pouget-Abadie, M.~Mirza, B.~Xu, D.~Warde-Farley, S.~Ozair, A.~Courville, and Y.~Bengio, ``Generative adversarial networks,'' \emph{Communications of the ACM}, vol.~63, no.~11, pp. 139--144, 2020.

\bibitem{chen2020generative}
M.~Chen, A.~Radford, R.~Child, J.~Wu, H.~Jun, D.~Luan, and I.~Sutskever, ``Generative pretraining from pixels,'' in \emph{International conference on machine learning}.\hskip 1em plus 0.5em minus 0.4em\relax PMLR, 2020, pp. 1691--1703.

\bibitem{donahue2019large}
J.~Donahue and K.~Simonyan, ``Large scale adversarial representation learning,'' \emph{Advances in neural information processing systems}, vol.~32, 2019.

\bibitem{ho2020denoising}
J.~Ho, A.~Jain, and P.~Abbeel, ``Denoising diffusion probabilistic models,'' \emph{Advances in neural information processing systems}, vol.~33, pp. 6840--6851, 2020.

\bibitem{song2020score}
Y.~Song, J.~Sohl-Dickstein, D.~P. Kingma, A.~Kumar, S.~Ermon, and B.~Poole, ``Score-based generative modeling through stochastic differential equations,'' \emph{arXiv preprint arXiv:2011.13456}, 2020.

\bibitem{krathwohl2002revision}
D.~Krathwohl, ``A revision bloom's taxonomy: An overview,'' \emph{Theory into Practice}, 2002.

\bibitem{johnson2018image}
J.~Johnson, A.~Gupta, and L.~Fei-Fei, ``Image generation from scene graphs,'' in \emph{Proceedings of the IEEE conference on computer vision and pattern recognition}, 2018, pp. 1219--1228.

\bibitem{wang2023infodiffusion}
Y.~Wang, Y.~Schiff, A.~Gokaslan, W.~Pan, F.~Wang, C.~De~Sa, and V.~Kuleshov, ``Infodiffusion: Representation learning using information maximizing diffusion models,'' in \emph{International Conference on Machine Learning}.\hskip 1em plus 0.5em minus 0.4em\relax PMLR, 2023, pp. 36\,336--36\,354.

\bibitem{hudson2024soda}
D.~A. Hudson, D.~Zoran, M.~Malinowski, A.~K. Lampinen, A.~Jaegle, J.~L. McClelland, L.~Matthey, F.~Hill, and A.~Lerchner, ``Soda: Bottleneck diffusion models for representation learning,'' in \emph{Proceedings of the IEEE/CVF Conference on Computer Vision and Pattern Recognition}, 2024, pp. 23\,115--23\,127.

\bibitem{li2023survey}
Y.~Li, Z.~Li, P.~Wang, J.~Li, X.~Sun, H.~Cheng, and J.~X. Yu, ``A survey of graph meets large language model: Progress and future directions,'' \emph{arXiv preprint arXiv:2311.12399}, 2023.

\bibitem{li2023gslb}
Z.~Li, L.~Wang, X.~Sun, Y.~Luo, Y.~Zhu, D.~Chen, Y.~Luo, X.~Zhou, Q.~Liu, S.~Wu \emph{et~al.}, ``Gslb: The graph structure learning benchmark,'' \emph{Advances in Neural Information Processing Systems}, vol.~36, pp. 30\,306--30\,318, 2023.

\bibitem{xiang2023denoising}
W.~Xiang, H.~Yang, D.~Huang, and Y.~Wang, ``Denoising diffusion autoencoders are unified self-supervised learners,'' in \emph{Proceedings of the IEEE/CVF International Conference on Computer Vision}, 2023, pp. 15\,802--15\,812.

\bibitem{chen2024deconstructing}
X.~Chen, Z.~Liu, S.~Xie, and K.~He, ``Deconstructing denoising diffusion models for self-supervised learning,'' \emph{arXiv preprint arXiv:2401.14404}, 2024.

\bibitem{yang2024directional}
R.~Yang, Y.~Yang, F.~Zhou, and Q.~Sun, ``Directional diffusion models for graph representation learning,'' \emph{Advances in Neural Information Processing Systems}, vol.~36, 2024.

\bibitem{abstreiter2021diffusion}
K.~Abstreiter, S.~Mittal, S.~Bauer, B.~Sch{\"o}lkopf, and A.~Mehrjou, ``Diffusion-based representation learning,'' \emph{arXiv preprint arXiv:2105.14257}, 2021.

\bibitem{linsker1988self}
R.~Linsker, ``Self-organization in a perceptual network,'' \emph{Computer}, vol.~21, no.~3, pp. 105--117, 1988.

\bibitem{hjelm2018learning}
R.~D. Hjelm, A.~Fedorov, S.~Lavoie-Marchildon, K.~Grewal, P.~Bachman, A.~Trischler, and Y.~Bengio, ``Learning deep representations by mutual information estimation and maximization,'' \emph{arXiv preprint arXiv:1808.06670}, 2018.

\bibitem{velickovic2019deep}
P.~Velickovic, W.~Fedus, W.~L. Hamilton, P.~Li{\`o}, Y.~Bengio, and R.~D. Hjelm, ``Deep graph infomax.'' \emph{ICLR (Poster)}, vol.~2, no.~3, p.~4, 2019.

\bibitem{sun2019infograph}
F.-Y. Sun, J.~Hoffmann, V.~Verma, and J.~Tang, ``Infograph: Unsupervised and semi-supervised graph-level representation learning via mutual information maximization,'' \emph{arXiv preprint arXiv:1908.01000}, 2019.

\bibitem{zhu2020deep}
Y.~Zhu, Y.~Xu, F.~Yu, Q.~Liu, S.~Wu, and L.~Wang, ``Deep graph contrastive representation learning,'' \emph{arXiv preprint arXiv:2006.04131}, 2020.

\bibitem{zhu2021graph}
------, ``Graph contrastive learning with adaptive augmentation,'' in \emph{Proceedings of the web conference 2021}, 2021, pp. 2069--2080.

\bibitem{you2020graph}
Y.~You, T.~Chen, Y.~Sui, T.~Chen, Z.~Wang, and Y.~Shen, ``Graph contrastive learning with augmentations,'' \emph{Advances in neural information processing systems}, vol.~33, pp. 5812--5823, 2020.

\bibitem{thakoor2021large}
S.~Thakoor, C.~Tallec, M.~G. Azar, M.~Azabou, E.~L. Dyer, R.~Munos, P.~Veli{\v{c}}kovi{\'c}, and M.~Valko, ``Large-scale representation learning on graphs via bootstrapping,'' \emph{arXiv preprint arXiv:2102.06514}, 2021.

\bibitem{zhang2021canonical}
H.~Zhang, Q.~Wu, J.~Yan, D.~Wipf, and P.~S. Yu, ``From canonical correlation analysis to self-supervised graph neural networks,'' \emph{Advances in Neural Information Processing Systems}, vol.~34, pp. 76--89, 2021.

\bibitem{hu2020gpt}
Z.~Hu, Y.~Dong, K.~Wang, K.-W. Chang, and Y.~Sun, ``Gpt-gnn: Generative pre-training of graph neural networks,'' in \emph{Proceedings of the 26th ACM SIGKDD international conference on knowledge discovery \& data mining}, 2020, pp. 1857--1867.

\bibitem{hassani2020contrastive}
K.~Hassani and A.~H. Khasahmadi, ``Contrastive multi-view representation learning on graphs,'' in \emph{International conference on machine learning}.\hskip 1em plus 0.5em minus 0.4em\relax PMLR, 2020, pp. 4116--4126.

\bibitem{qiu2020gcc}
J.~Qiu, Q.~Chen, Y.~Dong, J.~Zhang, H.~Yang, M.~Ding, K.~Wang, and J.~Tang, ``Gcc: Graph contrastive coding for graph neural network pre-training,'' in \emph{Proceedings of the 26th ACM SIGKDD international conference on knowledge discovery \& data mining}, 2020, pp. 1150--1160.

\bibitem{kipf2016variational}
T.~N. Kipf and M.~Welling, ``Variational graph auto-encoders,'' \emph{arXiv preprint arXiv:1611.07308}, 2016.

\bibitem{salehi2019graph}
A.~Salehi and H.~Davulcu, ``Graph attention auto-encoders,'' \emph{arXiv preprint arXiv:1905.10715}, 2019.

\bibitem{hou2022graphmae}
Z.~Hou, X.~Liu, Y.~Cen, Y.~Dong, H.~Yang, C.~Wang, and J.~Tang, ``Graphmae: Self-supervised masked graph autoencoders,'' in \emph{Proceedings of the 28th ACM SIGKDD Conference on Knowledge Discovery and Data Mining}, 2022, pp. 594--604.

\bibitem{chen2023uncovering}
D.~Chen, Y.~Zhu, J.~Zhang, Y.~Du, Z.~Li, Q.~Liu, S.~Wu, and L.~Wang, ``Uncovering neural scaling laws in molecular representation learning,'' \emph{Advances in Neural Information Processing Systems}, vol.~36, pp. 1452--1475, 2023.

\bibitem{chen2024beyond}
D.~Chen, Z.~Li, Y.~Ni, G.~Zhang, D.~Wang, Q.~Liu, S.~Wu, J.~Yu, and L.~Wang, ``Beyond efficiency: Molecular data pruning for enhanced generalization,'' \emph{Advances in Neural Information Processing Systems}, vol.~37, pp. 18\,036--18\,061, 2024.

\bibitem{zhang2024gder}
G.~Zhang, H.~Dong, Z.~Li, D.~Chen, K.~Wang, T.~Chen, Y.~Liang, D.~Cheng, K.~Wang \emph{et~al.}, ``Gder: Safeguarding efficiency, balancing, and robustness via prototypical graph pruning,'' \emph{Advances in Neural Information Processing Systems}, vol.~37, pp. 50\,285--50\,312, 2024.

\bibitem{preechakul2022diffusion}
K.~Preechakul, N.~Chatthee, S.~Wizadwongsa, and S.~Suwajanakorn, ``Diffusion autoencoders: Toward a meaningful and decodable representation,'' in \emph{Proceedings of the IEEE/CVF Conference on Computer Vision and Pattern Recognition}, 2022, pp. 10\,619--10\,629.

\bibitem{zhang2022unsupervised}
Z.~Zhang, Z.~Zhao, and Z.~Lin, ``Unsupervised representation learning from pre-trained diffusion probabilistic models,'' \emph{Advances in neural information processing systems}, vol.~35, pp. 22\,117--22\,130, 2022.

\bibitem{wei2023diffusion}
C.~Wei, K.~Mangalam, P.-Y. Huang, Y.~Li, H.~Fan, H.~Xu, H.~Wang, C.~Xie, A.~Yuille, and C.~Feichtenhofer, ``Diffusion models as masked autoencoders,'' in \emph{Proceedings of the IEEE/CVF International Conference on Computer Vision}, 2023.

\bibitem{anderson1982reverse}
B.~D. Anderson, ``Reverse-time diffusion equation models,'' \emph{Stochastic Processes and their Applications}, vol.~12, no.~3, pp. 313--326, 1982.

\bibitem{lu2022dpm}
C.~Lu, Y.~Zhou, F.~Bao, J.~Chen, C.~Li, and J.~Zhu, ``Dpm-solver: A fast ode solver for diffusion probabilistic model sampling in around 10 steps,'' \emph{Advances in Neural Information Processing Systems}, vol.~35, pp. 5775--5787, 2022.

\bibitem{xue2023sa}
S.~Xue, M.~Yi, W.~Luo, S.~Zhang, J.~Sun, Z.~Li, and Z.-M. Ma, ``Sa-solver: Stochastic adams solver for fast sampling of diffusion models,'' \emph{arXiv preprint arXiv:2309.05019}, 2023.

\bibitem{xue2024accelerating}
S.~Xue, Z.~Liu, F.~Chen, S.~Zhang, T.~Hu, E.~Xie, and Z.~Li, ``Accelerating diffusion sampling with optimized time steps,'' in \emph{Proceedings of the IEEE/CVF Conference on Computer Vision and Pattern Recognition}, 2024, pp. 8292--8301.

\bibitem{poole2019variational}
B.~Poole, S.~Ozair, A.~Van Den~Oord, A.~Alemi, and G.~Tucker, ``On variational bounds of mutual information,'' in \emph{International Conference on Machine Learning}.\hskip 1em plus 0.5em minus 0.4em\relax PMLR, 2019, pp. 5171--5180.

\bibitem{dhariwal2021diffusion}
P.~Dhariwal and A.~Nichol, ``Diffusion models beat gans on image synthesis,'' \emph{Advances in neural information processing systems}, vol.~34, pp. 8780--8794, 2021.

\bibitem{yang2023diffusion}
X.~Yang, D.~Zhou, J.~Feng, and X.~Wang, ``Diffusion probabilistic model made slim,'' in \emph{Proceedings of the IEEE/CVF Conference on computer vision and pattern recognition}, 2023, pp. 22\,552--22\,562.

\bibitem{si2024freeu}
C.~Si, Z.~Huang, Y.~Jiang, and Z.~Liu, ``Freeu: Free lunch in diffusion u-net,'' in \emph{Proceedings of the IEEE/CVF Conference on Computer Vision and Pattern Recognition}, 2024, pp. 4733--4743.

\bibitem{dieleman2024spectral}
\BIBentryALTinterwordspacing
S.~Dieleman, ``Diffusion is spectral autoregression,'' 2024. [Online]. Available: \url{https://sander.ai/2024/09/02/spectral-autoregression.html}
\BIBentrySTDinterwordspacing

\bibitem{tang2022rethinking}
J.~Tang, J.~Li, Z.~Gao, and J.~Li, ``Rethinking graph neural networks for anomaly detection,'' in \emph{International Conference on Machine Learning}.\hskip 1em plus 0.5em minus 0.4em\relax PMLR, 2022, pp. 21\,076--21\,089.

\bibitem{hou2023graphmae2}
Z.~Hou, Y.~He, Y.~Cen, X.~Liu, Y.~Dong, E.~Kharlamov, and J.~Tang, ``Graphmae2: A decoding-enhanced masked self-supervised graph learner,'' in \emph{Proceedings of the ACM web conference 2023}, 2023, pp. 737--746.

\bibitem{zhao2024masked}
Z.~Zhao, Y.~Li, Y.~Zou, J.~Tang, and R.~Li, ``Masked graph autoencoder with non-discrete bandwidths,'' in \emph{Proceedings of the ACM on Web Conference 2024}, 2024, pp. 377--388.

\bibitem{velickovic2017graph}
P.~Velickovic, G.~Cucurull, A.~Casanova, A.~Romero, P.~Lio, Y.~Bengio \emph{et~al.}, ``Graph attention networks,'' \emph{stat}, vol. 1050, no.~20, pp. 10--48\,550, 2017.

\bibitem{xu2018powerful}
K.~Xu, W.~Hu, J.~Leskovec, and S.~Jegelka, ``How powerful are graph neural networks?'' \emph{arXiv preprint arXiv:1810.00826}, 2018.

\bibitem{li2023what}
J.~Li, R.~Wu, W.~Sun, L.~Chen, S.~Tian, L.~Zhu, C.~Meng, Z.~Zheng, and W.~Wang, ``What's behind the mask: Understanding masked graph modeling for graph autoencoders,'' in \emph{Proceedings of the 29th ACM SIGKDD Conference on Knowledge Discovery and Data Mining}, 2023, pp. 1268--1279.

\bibitem{ronneberger2015unet}
O.~Ronneberger, P.~Fischer, and T.~Brox, ``U-net: Convolutional networks for biomedical image segmentation,'' in \emph{Medical image computing and computer-assisted intervention--MICCAI 2015: 18th international conference, Munich, Germany, October 5-9, 2015, proceedings, part III 18}.\hskip 1em plus 0.5em minus 0.4em\relax Springer, 2015, pp. 234--241.

\bibitem{elfwing2018sigmoid}
S.~Elfwing, E.~Uchibe, and K.~Doya, ``Sigmoid-weighted linear units for neural network function approximation in reinforcement learning,'' \emph{Neural networks}, vol. 107, pp. 3--11, 2018.

\bibitem{sen2008collective}
P.~Sen, G.~Namata, M.~Bilgic, L.~Getoor, B.~Galligher, and T.~Eliassi-Rad, ``Collective classification in network data,'' \emph{AI magazine}, vol.~29, no.~3, pp. 93--93, 2008.

\bibitem{shchur2018pitfalls}
O.~Shchur, M.~Mumme, A.~Bojchevski, and S.~G{\"u}nnemann, ``Pitfalls of graph neural network evaluation,'' \emph{arXiv preprint arXiv:1811.05868}, 2018.

\bibitem{hu2020open}
W.~Hu, M.~Fey, M.~Zitnik, Y.~Dong, H.~Ren, B.~Liu, M.~Catasta, and J.~Leskovec, ``Open graph benchmark: Datasets for machine learning on graphs,'' \emph{Advances in neural information processing systems}, vol.~33, pp. 22\,118--22\,133, 2020.

\bibitem{yanardag2015deep}
P.~Yanardag and S.~Vishwanathan, ``Deep graph kernels,'' in \emph{Proceedings of the 21th ACM SIGKDD international conference on knowledge discovery and data mining}, 2015, pp. 1365--1374.

\bibitem{chang2001libsvm}
C.~Chang and C.~Lin, ``Libsvm: a library for support vector,'' 2001.

\bibitem{kingma2014adam}
D.~P. Kingma, ``Adam: A method for stochastic optimization,'' \emph{arXiv preprint arXiv:1412.6980}, 2014.

\bibitem{loshchilov2017decoupled}
I.~Loshchilov, ``Decoupled weight decay regularization,'' \emph{arXiv preprint arXiv:1711.05101}, 2017.

\bibitem{loshchilov2016sgdr}
I.~Loshchilov and F.~Hutter, ``Sgdr: Stochastic gradient descent with warm restarts,'' \emph{arXiv preprint arXiv:1608.03983}, 2016.

\bibitem{wang2024rethinking}
L.~Wang, X.~Tao, Q.~Liu, and S.~Wu, ``Rethinking graph masked autoencoders through alignment and uniformity,'' in \emph{Proceedings of the AAAI Conference on Artificial Intelligence}, vol.~38, no.~14, 2024, pp. 15\,528--15\,536.

\bibitem{xu2021infogcl}
D.~Xu, W.~Cheng, D.~Luo, H.~Chen, and X.~Zhang, ``Infogcl: Information-aware graph contrastive learning,'' \emph{Advances in Neural Information Processing Systems}, vol.~34, pp. 30\,414--30\,425, 2021.

\bibitem{shervashidze2011weisfeiler}
N.~Shervashidze, P.~Schweitzer, E.~J. Van~Leeuwen, K.~Mehlhorn, and K.~M. Borgwardt, ``Weisfeiler-lehman graph kernels.'' \emph{Journal of Machine Learning Research}, vol.~12, no.~9, 2011.

\bibitem{narayanan2017graph2vec}
A.~Narayanan, M.~Chandramohan, R.~Venkatesan, L.~Chen, Y.~Liu, and S.~Jaiswal, ``graph2vec: Learning distributed representations of graphs,'' \emph{arXiv preprint arXiv:1707.05005}, 2017.

\end{thebibliography}

{\appendices
\onecolumn
\section{Proofs}\label{app:proofs}
\subsection{Proof of Theorem 1}
\setcounter{theorem}{0}
\begin{theorem}
    The denoising score matching objective $\Loss_{\bx_0, DSM}$ has a \textbf{strictly positive} lower bound, regardless of the network capacity and expressive power
    \begin{equation}
    \begin{aligned}
\min_{\bx_{\theta}} \Loss_{\bx_0, DSM} = &\min_{\bx_{\theta}}\E_t\left\{\lambda(t) \E_{\bx_0} \E_{\bx_t | \bx_0} \left[ \|\bx_{\btheta}(\bx_t, t) - \bx_0 \|^2 \right] \right\}\\
= &  \E_t\left\{\lambda(t)\E_{\bx_t}\left[\Tr(\Cov[\bx_0 | \bx_t])\right]\right\} > 0.
    \end{aligned}
    \end{equation}
    The conditioned denoising score matching objective objective $\Loss_{\bx_0, DSM, \bphi}$ has a \textbf{non-negative} lower bound, i.e.
    \begin{equation}
        \min_{\bx_{\theta}} \Loss_{\bx_0, DSM, \bphi} = \E_t\left\{\lambda(t)\E_{\bx_0,\bx_t}\left[\Tr(\Cov[\bx_0 | \bx_t, E_{\bphi}(\bx_0)])\right]\right\} \geq 0.
    \end{equation}
\end{theorem}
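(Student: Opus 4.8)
The plan is to exploit the classical fact that, for a fixed conditioning variable, the mean-square-error-minimizing estimator of $\bx_0$ is the conditional expectation $\E[\bx_0 \mid \cdot]$. First I would fix a time $t$ and note that the inner objective $\E_{\bx_0}\E_{\bx_t\mid\bx_0}\big[\|\bx_{\btheta}(\bx_t,t)-\bx_0\|^2\big]$ equals $\E_{\bx_0,\bx_t}\big[\|\bx_{\btheta}(\bx_t,t)-\bx_0\|^2\big]$ by the tower property. Since we are told to ignore network-capacity and optimization limitations, the minimization over $\bx_{\btheta}$ ranges over all measurable functions of $(\bx_t,t)$; pointwise in $\bx_t$ this is a standard least-squares problem whose minimizer is $\bx_{\btheta}^\star(\bx_t,t)=\E[\bx_0\mid\bx_t]$. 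Substituting back, the minimal inner value is $\E_{\bx_t}\big[\E_{\bx_0\mid\bx_t}\|\E[\bx_0\mid\bx_t]-\bx_0\|^2\big]=\E_{\bx_t}\big[\Tr(\Cov[\bx_0\mid\bx_t])\big]$, using the componentwise identity $\E[\|Z-\E[Z\mid Y]\|^2\mid Y]=\Tr(\Cov[Z\mid Y])$. Taking the outer expectation over $t$ against the nonnegative weight $\lambda(t)$ gives the claimed closed form. This is essentially a rigorous version of the well-known "the optimal denoiser is the posterior mean" observation; the main step is just justifying the interchange of minimization and expectation, which holds because the objective decouples pointwise over $\bx_t$.

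For the conditioned objective the argument is identical with $\bx_t$ replaced by the pair $(\bx_t, E_{\bphi}(\bx_0))$: the optimal $\bx_{\btheta}^\star(\bx_t,t,E_{\bphi}(\bx_0)) = \E[\bx_0 \mid \bx_t, E_{\bphi}(\bx_0)]$, and the minimal value becomes $\E_t\{\lambda(t)\,\E_{\bx_0,\bx_t}[\Tr(\Cov[\bx_0\mid\bx_t, E_{\bphi}(\bx_0)])]\}$. One subtlety here is that $E_{\bphi}(\bx_0)$ is itself a function of $\bx_0$ rather than an independently observed variable; but since $\bx_{\btheta}$ is allowed to depend on $E_{\bphi}(\bx_0)$ as an input, conditioning on the $\sigma$-algebra $\sigma(\bx_t, E_{\bphi}(\bx_0))$ is exactly the right thing, and the pointwise least-squares argument goes through verbatim.

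It remains to argue the strict inequality $\E_t\{\lambda(t)\E_{\bx_t}[\Tr(\Cov[\bx_0\mid\bx_t])]\}>0$ and the non-negativity of the conditioned bound. Non-negativity is immediate since traces of covariance matrices are sums of variances, hence $\geq 0$, and $\lambda(t)\geq 0$. For strict positivity I would argue that $\Cov[\bx_0\mid\bx_t]$ cannot be a.s. the zero matrix: if it were, then $\bx_0$ would be a deterministic function of $\bx_t$ for a.e.\ $t$, which is impossible because $\bx_t = \alpha_t\bx_0 + \sigma_t\noise$ with $\noise$ an independent Gaussian and $\sigma_t>0$ for $t>0$, so $\bx_t$ genuinely loses information about $\bx_0$ (unless $\bx_0$ is itself deterministic, which we exclude as the data distribution is non-degenerate). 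Hence $\E_{\bx_t}[\Tr(\Cov[\bx_0\mid\bx_t])]>0$ on a set of $t$ of positive measure where $\lambda(t)>0$, giving the strict inequality. The one place requiring a little care — and what I expect to be the main obstacle — is making the "minimization over all measurable $\bx_{\btheta}$ attains the pointwise minimizer" step fully rigorous (measurable-selection / existence of a regular conditional expectation), but under the standing assumption that $\bx_0$ is square-integrable this is standard $L^2$-projection theory and no real difficulty arises.
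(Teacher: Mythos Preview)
Your proposal is correct and follows essentially the same route as the paper: both arguments rest on the classical fact that the $L^2$-optimal estimator is the conditional expectation (the paper makes this explicit via the add-and-subtract decomposition $\bx_{\btheta}-\bx_0 = (\bx_{\btheta}-\E[\bx_0\mid\cdot]) + (\E[\bx_0\mid\cdot]-\bx_0)$ and vanishing cross term, which is exactly your ``pointwise least-squares'' step), then identify the minimum as the expected trace of the conditional covariance. Your treatment of strict positivity is in fact more careful than the paper's, which simply asserts it ``for non-degenerated distributions $\bx_0\mid\bx_t$''.
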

\begin{proof}
    \begin{equation}
        \begin{aligned}
            &\argmin\limits_{\bx_{\theta}} \Loss_{\bx_0, DSM}\\
            =& \argmin\limits_{\bx_{\theta}} \E_t\left\{\lambda(t) \E_{\bx_0} \E_{\bx_t | \bx_0} \left[ \|\bx_{\btheta}(\bx_t, t) - \bx_0 \|^2 \right] \right\} \\
            =& \argmin\limits_{\bx_{\theta}} \E_t\left\{\lambda(t)\E_{\bx_0,\bx_t}\left[ \|
            \bx_{\btheta}(\bx_t, t) - \E [\bx_0 | \bx_t] + \E [\bx_0 | \bx_t] - \bx_0 \|^2 \right]\right\} \\
            =& \argmin\limits_{\bx_{\theta}} \E_t\bigl\{ \lambda(t)\E_{\bx_0,\bx_t}\left[\|
            \bx_{\btheta}(\bx_t, t) - \E [\bx_0 | \bx_t]\|^2 + 2\langle \bx_{\btheta}(\bx_t, t) - \E [\bx_0 | \bx_t],\E [\bx_0 | \bx_t] - \bx_0\rangle \right]\\
            & +\lambda(t)\E_{\bx_0,\bx_t}\left[\|\E [\bx_0 | \bx_t] - \bx_0 \|^2\right] \bigl\}\\
            =& \argmin\limits_{\bx_{\theta}}\E_t\bigl\{ \lambda(t)\E_{\bx_0,\bx_t}\left[\|
            \bx_{\btheta}(\bx_t, t) - \E [\bx_0 | \bx_t]\|^2 + 2\langle\bx_{\btheta}(\bx_t, t) - \E [\bx_0 | \bx_t],\E [\bx_0 | \bx_t] - \bx_0\rangle \right]\bigl\}. \\
        \end{aligned}
    \end{equation}
    Note that
    \begin{equation}
        \begin{aligned}
            & \E_{\bx_0,\bx_t}\left[ \langle \bx_{\btheta}(\bx_t, t) - \E [\bx_0 | \bx_t],\E [\bx_0 | \bx_t] - \bx_0\rangle\right] \\
            = & \E_{\bx_t}\E_{\bx_0|\bx_t} \left[ \langle \bx_{\btheta}(\bx_t, t) - \E [\bx_0 | \bx_t],\E [\bx_0 | \bx_t] - \bx_0\rangle\right] \\
            = & \E_{\bx_t}\left[\langle\bx_{\btheta}(\bx_t, t) - \E [\bx_0 | \bx_t],\E_{\bx_0|\bx_t} \left[ \E [\bx_0 | \bx_t] - \bx_0\right] \rangle \right].\\
        \end{aligned}
    \end{equation}
    Due to the property of conditional expectation, we have that
    \begin{equation}
        \begin{aligned}
            \E_{\bx_0|\bx_t} \left[ \E [\bx_0 | \bx_t] - \bx_0\right]=\E [\bx_0 | \bx_t] - \E [\bx_0 | \bx_t]=0.
        \end{aligned}
    \end{equation}
    Thus we have
    \begin{equation}
        \E_{\bx_0,\bx_t}\left[ \langle \bx_{\btheta}(\bx_t, t) - \E [\bx_0 | \bx_t],\E [\bx_0 | \bx_t] - \bx_0\rangle\right] = 0.
    \end{equation}
    Thus
    \begin{equation}
        \begin{aligned}
            &\argmin\limits_{\bx_{\theta}} \Loss_{\bx_0, DSM}\\
            =& \argmin\limits_{\bx_{\theta}}\E_t\bigl\{ \lambda(t)\E_{\bx_0,\bx_t}\left[\|
            \bx_{\btheta}(\bx_t, t) - \E [\bx_0 | \bx_t]\|^2 + 2\langle\bx_{\btheta}(\bx_t, t) - \E [\bx_0 | \bx_t],\E [\bx_0 | \bx_t] - \bx_0\rangle \right]\bigl\} \\
            =& \argmin\limits_{\bx_{\theta}}\E_t\bigl\{ \lambda(t)\E_{\bx_0,\bx_t}\left[\|
            \bx_{\btheta}(\bx_t, t) - \E [\bx_0 | \bx_t]\|^2 \right]\bigl\}\\
            =& \E [\bx_0 | \bx_t].
        \end{aligned}
    \end{equation}
    Substitute the minimizer of $\Loss_{\bx_0, DSM}$ into it, we get the minimum of $\Loss_{\bx_0, DSM}$
    \begin{equation}
        \begin{aligned}
            &\min_{\bx_{\theta}} \Loss_{\bx_0, DSM}\\
            =& \min_{\bx_{\theta}} \E_t\left\{\lambda(t) \E_{\bx_0} \E_{\bx_t | \bx_0} \left[ \|
            \bx_{\btheta}(\bx_t, t) - \bx_0 \|^2 \right] \right\} \\ 
            =& \E_t\left\{\lambda(t) \E_{\bx_0} \E_{\bx_t | \bx_0} \left[ \|
            \E [\bx_0 | \bx_t] - \bx_0 \|^2 \right] \right\}\\
            =& \E_t\left\{\lambda(t)\E_{\bx_t}\E_{\bx_0|\bx_t} \left[ 
            (\E [\bx_0 | \bx_t] - \bx_0)^T(\E [\bx_0 | \bx_t] - \bx_0)  \right]\right\}\\
            =& \E_t\left\{\lambda(t)\E_{\bx_t}\E_{\bx_0|\bx_t} \left[ 
            \Tr((\E [\bx_0 | \bx_t] - \bx_0)^T(\E [\bx_0 | \bx_t] - \bx_0))  \right]\right\}\\
            =& \E_t\left\{\lambda(t)\E_{\bx_t}\E_{\bx_0|\bx_t} \left[ 
            \Tr((\E [\bx_0 | \bx_t] - \bx_0)(\E [\bx_0 | \bx_t] - \bx_0)^T)  \right]\right\}\\
            =& \E_t\left\{\lambda(t)\E_{\bx_t}\left[\Tr(\E_{\bx_0|\bx_t} \left[ 
            (\E [\bx_0 | \bx_t] - \bx_0)(\E [\bx_0 | \bx_t] - \bx_0)^T  \right])\right]\right\}\\
            =& \E_t\left\{\lambda(t)\E_{\bx_t}\left[\Tr(\Cov[\bx_0 | \bx_t])\right]\right\} > 0.
        \end{aligned}
    \end{equation}
The minimum is strictly positive for non-degenerated distributions $\bx_0|\bx_t$.

The proof of conditioned denoising score matching objective is similar.

    \begin{equation}
        \begin{aligned}
            &\argmin\limits_{\bx_{\theta}} \Loss_{\bx_0, DSM, \bphi}\\
            =& \argmin\limits_{\bx_{\theta}} \E_t\left\{\lambda(t) \E_{\bx_0} \E_{\bx_t | \bx_0} \left[ \|\bx_{\btheta}(\bx_t, t, E_{\bphi}(\bx_0)) - \bx_0 \|^2 \right] \right\} \\
            =& \argmin\limits_{\bx_{\theta}} \E_t\left\{\lambda(t)\E_{\bx_0,\bx_t}\left[ \|
            \bx_{\btheta}(\bx_t, t, E_{\bphi}(\bx_0)) - \E [\bx_0 | \bx_t, E_{\bphi}(\bx_0)] + \E [\bx_0 | \bx_t, E_{\bphi}(\bx_0)] - \bx_0 \|^2 \right]\right\} \\
            =& \argmin\limits_{\bx_{\theta}} \E_t\bigl\{ \lambda(t)\E_{\bx_0,\bx_t}\left[\|
            \bx_{\btheta}(\bx_t, t, E_{\bphi}(\bx_0)) - \E [\bx_0 | \bx_t, E_{\bphi}(\bx_0)]\|^2\right] +  \\
            & +2\lambda(t)\E_{\bx_0,\bx_t}\left[\langle\bx_{\btheta}(\bx_t, t, E_{\bphi}(\bx_0)) - \E [\bx_0 | \bx_t, E_{\bphi}(\bx_0)],\E [\bx_0 | \bx_t, E_{\bphi}(\bx_0)] - \bx_0\rangle\right]\\
            & +\lambda(t)\E_{\bx_0,\bx_t}\left[\|\E [\bx_0 | \bx_t, E_{\bphi}(\bx_0)] - \bx_0 \|^2\right] \bigl\}\\
            =& \argmin\limits_{\bx_{\theta}}\E_t\bigl\{ \lambda(t)\E_{\bx_0,\bx_t}\left[\|
            \bx_{\btheta}(\bx_t, t, E_{\bphi}(\bx_0)) - \E [\bx_0 | \bx_t, E_{\bphi}(\bx_0)]\|^2\right] \\
            & + 2\lambda(t)\E_{\bx_0,\bx_t}\left[\langle\bx_{\btheta}(\bx_t, t, E_{\bphi}(\bx_0)) - \E [\bx_0 | \bx_t, E_{\bphi}(\bx_0)],\E [\bx_0 | \bx_t, E_{\bphi}(\bx_0)] - \bx_0\rangle \right]\bigl\}. \\
        \end{aligned}
    \end{equation}
    Note that
    \begin{equation}
        \begin{aligned}
            & \E_{\bx_0,\bx_t}\left[ \langle\bx_{\btheta}(\bx_t, t, E_{\bphi}(\bx_0)) - \E [\bx_0 | \bx_t, E_{\bphi}(\bx_0)],\E [\bx_0 | \bx_t, E_{\bphi}(\bx_0)] - \bx_0\rangle\right] \\
            = & \E_{\bx_0,\bx_t, E_{\bphi}(\bx_0)}\left[ \langle\bx_{\btheta}(\bx_t, t, E_{\bphi}(\bx_0)) - \E [\bx_0 | \bx_t, E_{\bphi}(\bx_0)],\E [\bx_0 | \bx_t, E_{\bphi}(\bx_0)] - \bx_0\rangle\right] \\
            = & \E_{\bx_t, E_{\bphi}(\bx_0)}\E_{\bx_0|\bx_t, E_{\bphi}(\bx_0)} \left[ \langle\bx_{\btheta}(\bx_t, t, E_{\bphi}(\bx_0)) - \E [\bx_0 | \bx_t, E_{\bphi}(\bx_0)],\E [\bx_0 | \bx_t, E_{\bphi}(\bx_0)] - \bx_0\rangle\right] \\
            = & \E_{\bx_t, E_{\bphi}(\bx_0)}\left[\langle \bx_{\btheta}(\bx_t, t, E_{\bphi}(\bx_0)) - \E [\bx_0 | \bx_t, E_{\bphi}(\bx_0)],\E_{\bx_0|\bx_t, E_{\bphi}(\bx_0)} \left[ \E [\bx_0 | \bx_t, E_{\bphi}(\bx_0)] - \bx_0\right] \rangle \right].\\
        \end{aligned}
    \end{equation}
    Due to the property of conditional expectation, we have that
    \begin{equation}
        \begin{aligned}
            \E_{\bx_0|\bx_t, E_{\bphi}(\bx_0)} \left[ \E [\bx_0 | \bx_t, E_{\bphi}(\bx_0)] - \bx_0\right]=\E [\bx_0 | \bx_t, E_{\bphi}(\bx_0)] - \E [\bx_0 | \bx_t, E_{\bphi}(\bx_0)]=0.
        \end{aligned}
    \end{equation}
    Thus we have
    \begin{equation}
        \E_{\bx_0,\bx_t}\left[ \langle\bx_{\btheta}(\bx_t, t, E_{\bphi}(\bx_0)) - \E [\bx_0 | \bx_t, E_{\bphi}(\bx_0)],\E [\bx_0 | \bx_t, E_{\bphi}(\bx_0)] - \bx_0\rangle\right] = 0.  
    \end{equation}
    Thus
    \begin{equation}
        \begin{aligned}
            &\argmin\limits_{\bx_{\theta}} \Loss_{\bx_0, DSM, \bphi}\\
            =& \argmin\limits_{\bx_{\theta}}\E_t\bigl\{ \lambda(t)\E_{\bx_0,\bx_t}\left[\|
            \bx_{\btheta}(\bx_t, t, E_{\bphi}(\bx_0)) - \E [\bx_0 | \bx_t, E_{\bphi}(\bx_0)]\|^2\right] \\
            & + 2\lambda(t)\E_{\bx_0,\bx_t}\left[\langle\bx_{\btheta}(\bx_t, t, E_{\bphi}(\bx_0)) - \E [\bx_0 | \bx_t, E_{\bphi}(\bx_0)],\E [\bx_0 | \bx_t, E_{\bphi}(\bx_0)] - \bx_0\rangle \right]\bigl\} \\
            =& \argmin\limits_{\bx_{\theta}}\E_t\bigl\{ \lambda(t)\E_{\bx_0,\bx_t}\left[\|
            \bx_{\btheta}(\bx_t, t, E_{\bphi}(\bx_0)) - \E [\bx_0 | \bx_t, E_{\bphi}(\bx_0)]\|^2\right] \\
            =& \E [\bx_0 | \bx_t, E_{\bphi}(\bx_0)].
        \end{aligned}
    \end{equation}
    Substitute the minimizer of $\Loss_{\bx_0, DSM}$ into it, we get the minimum of $\Loss_{\bx_0, DSM}$
    \begin{equation}
        \begin{aligned}
            &\min_{\bx_{\theta}} \Loss_{\bx_0, DSM, \bphi}\\
            =& \min_{\bx_{\theta}} \E_t\left\{\lambda(t) \E_{\bx_0} \E_{\bx_t | \bx_0} \left[ \|\bx_{\btheta}(\bx_t, t, E_{\bphi}(\bx_0)) - \bx_0 \|^2 \right] \right\} \\ 
            =& \E_t\left\{\lambda(t) \E_{\bx_0} \E_{\bx_t | \bx_0} \left[ \|
            \E [\bx_0 | \bx_t, E_{\bphi}(\bx_0)] - \bx_0 \|^2 \right] \right\}\\
            =& \E_t\left\{\lambda(t)\E_{\bx_t, E_{\bphi}(\bx_0)}\E_{\bx_0|\bx_t, E_{\bphi}(\bx_0)} \left[ 
            (\E [\bx_0 | \bx_t, E_{\bphi}(\bx_0)] - \bx_0)^T(\E [\bx_0 | \bx_t, E_{\bphi}(\bx_0)] - \bx_0)  \right]\right\}\\
            =& \E_t\left\{\lambda(t)\E_{\bx_t, E_{\bphi}(\bx_0)}\E_{\bx_0|\bx_t, E_{\bphi}(\bx_0)} \left[ 
            \Tr((\E [\bx_0 | \bx_t, E_{\bphi}(\bx_0)] - \bx_0)^T(\E [\bx_0 | \bx_t, E_{\bphi}(\bx_0)] - \bx_0))  \right]\right\}\\
            =& \E_t\left\{\lambda(t)\E_{\bx_t, E_{\bphi}(\bx_0)}\E_{\bx_0| \bx_t, E_{\bphi}(\bx_0)} \left[ 
            \Tr((\E [\bx_0 | \bx_t, E_{\bphi}(\bx_0)] - \bx_0)(\E [\bx_0 | \bx_t, E_{\bphi}(\bx_0)] - \bx_0)^T)  \right]\right\}\\
            =& \E_t\left\{\lambda(t)\E_{\bx_t, E_{\bphi}(\bx_0)}\left[\Tr(\E_{\bx_0|\bx_t, E_{\bphi}(\bx_0)} \left[ 
            (\E [\bx_0 | \bx_t, E_{\bphi}(\bx_0)] - \bx_0)(\E [\bx_0 | \bx_t, E_{\bphi}(\bx_0)] - \bx_0)^T  \right])\right]\right\}\\
            =& \E_t\left\{\lambda(t)\E_{\bx_t, E_{\bphi}(\bx_0)}\left[\Tr(\Cov[\bx_0 | \bx_t, E_{\bphi}(\bx_0)])\right]\right\} \\
            =& \E_t\left\{\lambda(t)\E_{\bx_0, \bx_t}\left[\Tr(\Cov[\bx_0 | \bx_t, E_{\bphi}(\bx_0)])\right]\right\} \geq 0.
        \end{aligned}
    \end{equation}
\end{proof}
\subsection{Proof of Lemmas}
\setcounter{lemma}{0}
\begin{lemma}
    $\bU$ and $\bV$ are two square-integrable random variables. $\bU$ is $\mathcal{G}$-measurable and $\E\left[\bV|\mathcal{G}\right] = \mathbf{0}$, then
    \begin{equation}
        \E \left[ \| \bU+\bV \|^2 \right] = \E \left[ \| \bU\|^2 \right] + \E \left[ \| \bV \|^2 \right].
    \end{equation}
\end{lemma}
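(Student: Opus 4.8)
The plan is to expand the squared norm and show that the cross term vanishes by conditioning on $\mathcal{G}$. Writing $\|\bU+\bV\|^2 = \|\bU\|^2 + 2\langle\bU,\bV\rangle + \|\bV\|^2$ and taking expectations, it suffices to prove that $\E[\langle\bU,\bV\rangle] = 0$. Before doing so I would record the integrability bookkeeping: by Cauchy--Schwarz, $|\langle\bU,\bV\rangle| \le \|\bU\|\,\|\bV\|$, and another application of Cauchy--Schwarz gives $\E[\|\bU\|\,\|\bV\|] \le \sqrt{\E[\|\bU\|^2]\,\E[\|\bV\|^2]} < \infty$ since $\bU$ and $\bV$ are square-integrable. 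Hence $\langle\bU,\bV\rangle$ is integrable and the conditional expectation $\E[\langle\bU,\bV\rangle \mid \mathcal{G}]$ is well defined, and likewise $\E[\|\bU+\bV\|^2]$, $\E[\|\bU\|^2]$, $\E[\|\bV\|^2]$ are all finite.

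Next I would invoke the tower property: $\E[\langle\bU,\bV\rangle] = \E\bigl[\E[\langle\bU,\bV\rangle \mid \mathcal{G}]\bigr]$. Because $\bU$ is $\mathcal{G}$-measurable (and $\langle\bU,\bV\rangle$ is integrable), the "taking out what is known" property of conditional expectation lets me pull $\bU$ outside, componentwise, so that $\E[\langle\bU,\bV\rangle \mid \mathcal{G}] = \langle\bU,\ \E[\bV \mid \mathcal{G}]\rangle$. By hypothesis $\E[\bV \mid \mathcal{G}] = \mathbf{0}$, so the inner conditional expectation is $0$ almost surely, and therefore $\E[\langle\bU,\bV\rangle] = 0$. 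Substituting back yields $\E[\|\bU+\bV\|^2] = \E[\|\bU\|^2] + \E[\|\bV\|^2]$, which is the claim.

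The only real obstacle is the measure-theoretic justification rather than the algebra: one must confirm that $\langle\bU,\bV\rangle$ is genuinely integrable (so the tower property and the pull-out rule apply) and handle the vector-valued case by working coordinatewise, i.e. $\langle\bU,\bV\rangle = \sum_i U_i V_i$ with each $U_i$ being $\mathcal{G}$-measurable and $\E[V_i \mid \mathcal{G}] = 0$. Everything else is a direct expansion. I would keep the write-up short, stating the Cauchy--Schwarz integrability bound, then the tower property, then the pull-out step, then the conclusion.
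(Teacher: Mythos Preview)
Your proposal is correct and follows essentially the same approach as the paper: expand the square, then use the tower property and pull $\bU$ out of the conditional expectation to kill the cross term. Your version is in fact more careful than the paper's, which omits the Cauchy--Schwarz integrability check and the coordinatewise remark.
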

\begin{proof}
    \begin{equation}
        \begin{aligned}
            & \E \left[ \| \bU+\bV \|^2 \right] \\
            =& \E \left[ \| \bU\|^2 \right] + \E \left[ \| \bV \|^2 \right] + 2\E \left[ \langle \bU,\bV\rangle\right],
        \end{aligned}
    \end{equation}
    while
    \begin{equation}
    \E \left[ \langle \bU,\bV\rangle\right]
            = \E\left[\E \left[ \langle \bU,\bV\rangle | \mathcal{G}\right]\right] 
            = \E\left[\langle \bU ,\E \left[ \bV | \mathcal{G}\right] \rangle\right] 
            = 0.
    \end{equation}
\end{proof}
\begin{lemma}
    $\bX$ is a random variable, $\mathcal{F}$ and $\mathcal{G}$ are two $\sigma$-algebras such that $\mathcal{G} \subset \mathcal{F}$, then we have 
    \begin{equation}
        \E \left[ \|\E \left[\bX |\mathcal{F} \right]\|^2 \right] \geq \E \left[ \|\E \left[\bX |\mathcal{G} \right]\|^2 \right].
    \end{equation}
\end{lemma}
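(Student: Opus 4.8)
The plan is to reduce this to Lemma~\ref{lemma:prob1} via the tower property of conditional expectation. Write $\bX' := \E[\bX \mid \mathcal{F}]$, which is square-integrable since $\bX$ is. Because $\mathcal{G} \subset \mathcal{F}$, the tower property gives $\E[\bX' \mid \mathcal{G}] = \E[\E[\bX \mid \mathcal{F}] \mid \mathcal{G}] = \E[\bX \mid \mathcal{G}]$, so it suffices to prove $\E[\|\bX'\|^2] \geq \E[\|\E[\bX' \mid \mathcal{G}]\|^2]$.

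Next I would perform the orthogonal decomposition $\bX' = \bU + \bV$, where $\bU := \E[\bX' \mid \mathcal{G}]$ and $\bV := \bX' - \E[\bX' \mid \mathcal{G}]$. By construction $\bU$ is $\mathcal{G}$-measurable, and $\E[\bV \mid \mathcal{G}] = \E[\bX' \mid \mathcal{G}] - \E[\bX' \mid \mathcal{G}] = \mathbf{0}$ by linearity and idempotence of conditional expectation. Both $\bU$ and $\bV$ are square-integrable (as $\bU$ is a conditional expectation of a square-integrable variable and $\bV$ is a difference of two such). Hence the hypotheses of Lemma~\ref{lemma:prob1} are met.

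Applying Lemma~\ref{lemma:prob1} yields $\E[\|\bX'\|^2] = \E[\|\bU\|^2] + \E[\|\bV\|^2] \geq \E[\|\bU\|^2] = \E[\|\E[\bX \mid \mathcal{G}]\|^2]$, since $\E[\|\bV\|^2] \geq 0$. Substituting $\E[\|\bX'\|^2] = \E[\|\E[\bX \mid \mathcal{F}]\|^2]$ gives the claim.

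There is no real obstacle here; the only point requiring mild care is confirming that all the random variables involved are square-integrable so that Lemma~\ref{lemma:prob1} applies (which follows from the conditional Jensen inequality, $\E[\|\E[\bX\mid\mathcal{F}]\|^2] \le \E[\|\bX\|^2] < \infty$), and correctly invoking the tower property to identify $\E[\E[\bX\mid\mathcal{F}]\mid\mathcal{G}]$ with $\E[\bX\mid\mathcal{G}]$.
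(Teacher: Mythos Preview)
Your proof is correct and follows essentially the same approach as the paper: both set $\bU = \E[\bX \mid \mathcal{G}]$ and $\bV = \E[\bX \mid \mathcal{F}] - \E[\bX \mid \mathcal{G}]$ (your $\bX' - \E[\bX' \mid \mathcal{G}]$ is exactly this once the tower property is applied), verify $\E[\bV \mid \mathcal{G}] = \mathbf{0}$ via the tower property, and invoke Lemma~\ref{lemma:prob1}. Your additional remarks on square-integrability are a welcome bit of extra rigor but do not change the argument.
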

\begin{proof}
    Let $\bU = \E \left[\bX |\mathcal{G} \right]$ and $\bV = \E \left[\bX |\mathcal{F} \right] - \E \left[\bX |\mathcal{G} \right]$, $\bU$ is $\mathcal{G}$-measurable and according to the tower property of conditional expectation
    \begin{equation}
    \E\left[\bV| \mathcal{G}\right ] 
            = \E\left[ \E \left[\bX |\mathcal{F} \right]| \mathcal{G}\right] - \E \left[\bX |\mathcal{G} \right]
            =\E \left[\bX |\mathcal{G} \right] - \E \left[\bX |\mathcal{G} \right]
            = 0.
    \end{equation}
    According to lemma 1, we have
    \begin{equation}
        \E \left[ \|\E \left[\bX |\mathcal{F} \right]\|^2 \right] = \E \left[ \|\E \left[\bX |\mathcal{G} \right]\|^2 \right] + \E \left[ \|\E \left[\bX |\mathcal{F} \right] - \E \left[\bX |\mathcal{G} \right]\|^2 \right] \geq \E \left[ \|\E \left[\bX |\mathcal{G} \right]\|^2 \right].
    \end{equation}
\end{proof}

\begin{lemma}
    Let $\Pi_t$ be the set of distribution $p(x)$ on $\mathbb{R}^n$ satisfying the following condition:
    \begin{equation}
        \E_p\left[\mathbf{X}\right] = \mathbf{0}, \quad\Tr\left(\Cov_p\left[\mathbf{X}\right]\right) = t.
    \end{equation}
    Then the n-dimensional Gaussian distribution with mean $\mathbf{0}$ and covariance matrix $\Sigma = \frac{t}{n}I_n$ is the maximum entropy distribution in $\Pi_t$
\end{lemma}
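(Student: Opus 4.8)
The plan is to prove this via the Gibbs-type inequality (non-negativity of the Kullback--Leibler divergence) using the candidate isotropic Gaussian as the reference density, exploiting the key fact that the log-density of a zero-mean isotropic Gaussian is an affine function of $\|x\|^2$, so its expectation is \emph{completely determined} by the two constraints defining $\Pi_t$. Concretely, let $q$ denote the density of $\Normal(\mathbf{0}, \tfrac{t}{n}I_n)$, so that
\[
-\log q(x) = \frac{n}{2}\log\!\left(\frac{2\pi t}{n}\right) + \frac{n}{2t}\,\|x\|^2 .
\]

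First I would note that any $p \in \Pi_t$ has $\E_p[\|\mathbf{X}\|^2] = \Tr(\Cov_p[\mathbf{X}]) = t$, since $\E_p[\mathbf{X}] = \mathbf{0}$. Therefore the cross-entropy is finite and, crucially, the same for every member of $\Pi_t$:
\[
\E_p\!\left[-\log q(\mathbf{X})\right] = \frac{n}{2}\log\!\left(\frac{2\pi t}{n}\right) + \frac{n}{2} = h(q),
\]
the differential entropy of the isotropic Gaussian itself. Next I would invoke $D_{KL}(p\|q) = \E_p[\log(p(\mathbf{X})/q(\mathbf{X}))] \geq 0$, which rearranges to $h(p) = -\E_p[\log p(\mathbf{X})] \leq -\E_p[\log q(\mathbf{X})] = h(q)$. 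Since $q$ itself lies in $\Pi_t$ (its mean is $\mathbf{0}$ and its covariance $\tfrac{t}{n}I_n$ has trace $t$), this shows that $q$ attains the maximum entropy over $\Pi_t$, with equality iff $p = q$ almost everywhere.

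The main obstacle is purely a matter of rigor in the cross-entropy manipulation: one must confirm that $\E_p[-\log q(\mathbf{X})]$ is genuinely finite (guaranteed by the finite-second-moment constraint) and that the inequality $h(p) \leq h(q)$ survives the degenerate cases $h(p) = -\infty$ or $D_{KL}(p\|q) = +\infty$ — both of which only strengthen the conclusion. An alternative route, which I would mention but not carry out in detail, is the two-step argument: (i) among all distributions with a fixed covariance $\Sigma$, the Gaussian $\Normal(\mathbf{0},\Sigma)$ maximizes entropy, with value $\tfrac12\log((2\pi e)^n \det\Sigma)$; (ii) maximizing $\det\Sigma$ subject to $\Tr(\Sigma) = t$ is, by the AM--GM inequality on the eigenvalues $\lambda_1,\dots,\lambda_n$ with $\sum_i \lambda_i = t$, achieved precisely when $\lambda_1 = \cdots = \lambda_n = t/n$, i.e. $\Sigma = \tfrac{t}{n}I_n$. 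This yields the same extremizer, but the direct cross-entropy proof is shorter and self-contained, so that is the one I would present.
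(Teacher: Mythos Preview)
Your proof is correct. Interestingly, the two routes you describe are exactly swapped relative to the paper: the paper's actual proof is your ``alternative'' two-step argument --- it first invokes the classical fact that the Gaussian maximizes entropy among all distributions with a given covariance, reducing the problem to maximizing $\det(\Sigma)$ over positive-definite $\Sigma$ with $\Tr(\Sigma)=t$, and then applies AM--GM on the eigenvalues to conclude $\Sigma=\tfrac{t}{n}I_n$. Your primary route via the Gibbs/KL inequality against the isotropic Gaussian $q$ is a genuinely different and somewhat slicker argument: because $-\log q(x)$ depends on $x$ only through $\|x\|^2$, the cross-entropy $\E_p[-\log q(\mathbf{X})]$ is pinned down by the trace constraint alone, so the inequality $h(p)\le h(q)$ falls out in one stroke without ever passing through the intermediate ``fixed-covariance'' family or needing AM--GM. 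The paper's version has the mild advantage of making the two conceptual reductions (shape $\to$ Gaussian, then covariance $\to$ isotropic) visible, while yours is shorter and avoids citing the fixed-covariance maximum-entropy result as a black box.
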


\begin{proof}
    We know that any probability distribution on $\mathbb{R}_n$ with finite means and finite covariances has its entropy bounded by the entropy of the n-dimensional Gaussian with the same means and covariances. Thus the maximum entropy distribution in $\mathbb{R}_n$ lies among the n-dimensional Gaussians in $\Pi_t$, which are the distributions of the form
    \begin{equation}
        p_{\Sigma}(\bx) = \frac{1}{\sqrt{(2\pi)^n \det (\Sigma)}}\exp{\left(-\frac{\bx^T \Sigma^{-1}\bx}{2}\right)},
    \end{equation}
    where $\Sigma$ is a positive-definite symmetric matrix with trace $t$. The entropy of $p_{\Sigma}$ is
    \begin{equation}
        h(p_{\Sigma}) = \frac{1}{2}\left(n + \log{\left((2\pi)^n \det (\Sigma)\right)}\right).
    \end{equation}
    The arithmetic-geometric mean inequality on the eigenvalues of $\Sigma$ derives
    \begin{equation}
        \frac{1}{n}\Tr(\Sigma) \geq \sqrt[n]{\det (\Sigma)}.
    \end{equation}
    The equality holds if and only if all the eigenvalues of $\Sigma$ are equal. Therefore
    \begin{equation}
        h(p_{\Sigma}) \leq \frac{n}{2}\left(1 + \log{\left( \frac{2\pi t}{n}\right)}\right).
    \end{equation}
    Thus the n-dimensional Gaussians with mean $\mathbf{0}$ and covariance $\frac{t}{n}I_n$ is the maximum entropy distribution in $\Pi_t$.
\end{proof}

\clearpage
\section{Hyper-paramter Configurations}
\label{app:hyper}

\begin{table*}[h]
    \centering
    \caption{Hyper-parameter configurations for node classification datasets.
    }
    \begin{adjustbox}{max width=\textwidth} 
    \renewcommand{\arraystretch}{1.05}
    \begin{tabular}{c|c|cccccc}
        \toprule[1.2pt]
            & Dataset &   Cora      & CiteSeer      & PubMed                & Ogbn-arxiv        & Computer               & Photo       \\
         \midrule
        \multirow{8}{*}{Hyper-parameters} 
        & feat\_drop            & 0.3   & 0.4   &  0.2  &  0.1  &  0.4  &  0.1   \\
        & att\_drop             &  0.1  &  0.2  &  0.2  &  0.2  &  0.2  & 0.3    \\
        & num\_head             &  4  &  4  &  2  & 2   &  2  &  4   \\
        & num\_hidden           &  1024  &  1024  &  1024  &  256  &  512  & 512    \\
         & learning\_rate       &  1e-4  &  1e-4  &  1e-4  &  1e-3  &  1e-4  &  3e-4   \\
         & mask\_ratio          &  0.7  & 0.7   &  0.7  &  0.7  &  0.7  &  0.7   \\
         & noise\_schedule      &  sigmoid  &  sigmoid  &  sigmoid  &  inverted  &   quad &   sigmoid  \\
         & optimizer             &  Adam &  Adam  &  Adam  &  Adam  &  Adam  & Adam    \\
        \bottomrule[1.2pt]
    \end{tabular}
    \end{adjustbox}
\end{table*}

\begin{table*}[h]
    \centering
    \caption{Hyper-parameter configurations for graph classification datasets.
    }
    \begin{adjustbox}{max width=\textwidth} 
    \renewcommand{\arraystretch}{1.05}
    \begin{tabular}{c|c|ccccc}
        \toprule[1.2pt]
            & Dataset &  IMDB-B     & IMDB-M     & PROTEINS   & COLLAB     & MUTAG       \\
         \midrule
        \multirow{8}{*}{Hyper-parameters} 
        & feat\_drop            & 0.3   & 0.3   &  0.3  &  0.3  &  0.3   \\
        & att\_drop             &  0.1  &  0.2  &  0.2  &  0.2  &  0.2      \\
        & num\_head             &  2  &  2  &  2  & 2   &  2   \\
        & num\_hidden           &  512  &  512  &  512  &  512  &  32    \\
         & learning\_rate       &  1e-4  &  1e-4  &  1e-4  &  1e-3  &  1e-4  \\
         & mask\_ratio          &  0.3  & 0.3   &  0.3  &  0.3  &  0   \\
         & noise\_schedule      &  sigmoid  &  sigmoid  &  sigmoid  &  sigmoid  &   sigmoid  \\
         & optimizer             &  Adam &  Adam  &  Adam  &  Adam  &  Adam   \\
        \bottomrule[1.2pt]
    \end{tabular}
    \end{adjustbox}
\end{table*}

}
\clearpage



\vfill

\end{document}